\newtheorem{theorem}{Theorem}
\crefname{theorem}{theorem}{Theorems}
\Crefname{Theorem}{Theorem}{Theorems}
\newtheorem*{lemma_nonumber*}{Lemma}
\newaliascnt{lemma}{theorem}
\newtheorem{lemma}[lemma]{Lemma}
\crefname{lemma}{lemma}{lemmas}
\Crefname{Lemma}{Lemma}{Lemmas}
\newaliascnt{corollary}{theorem}
\newtheorem{corollary}[corollary]{Corollary}
\crefname{corollary}{corollary}{corollaries}
\Crefname{Corollary}{Corollary}{Corollaries}
\newaliascnt{proposition}{theorem}
\newtheorem{proposition}[proposition]{Proposition}
\crefname{proposition}{proposition}{propositions}
\Crefname{Proposition}{Proposition}{Propositions}
\newaliascnt{definition}{theorem}
\crefname{definition}{definition}{definitions}
\Crefname{Definition}{Definition}{Definitions}
\newaliascnt{remark}{theorem}
\newtheorem{remark}[remark]{Remark}
\crefname{remark}{remark}{remarks}
\Crefname{Remark}{Remark}{Remarks}
\crefname{example}{example}{examples}
\Crefname{Example}{Example}{Examples}
\crefname{technique}{technique}{techniques}
\Crefname{Technique}{Technique}{Techniques}
\crefname{figure}{figure}{figures}
\Crefname{Figure}{Figure}{Figures}
\newtheorem{assumptionF}{\textbf{F}\hspace{-3pt}}
\Crefname{assumptionB}{\textbf{B}\hspace{-3pt}}{\textbf{B}\hspace{-3pt}}
\crefname{assumptionB}{\textbf{B}}{\textbf{B}}
\Crefname{assumptionC}{\textbf{C}\hspace{-3pt}}{\textbf{C}\hspace{-3pt}}
\crefname{assumptionC}{\textbf{C}}{\textbf{C}}
\Crefname{assumptionH}{\textbf{H}\hspace{-3pt}}{\textbf{H}\hspace{-3pt}}
\crefname{assumptionH}{\textbf{H}}{\textbf{H}}
\Crefname{assumptionT}{\textbf{T}\hspace{-3pt}}{\textbf{T}\hspace{-3pt}}
\crefname{assumptionT}{\textbf{T}}{\textbf{T}}
\Crefname{assumptionT}{\textbf{T}\hspace{-3pt}}{\textbf{T}\hspace{-3pt}}
\crefname{assumptionT}{\textbf{T}}{\textbf{T}}
\Crefname{assumptionL}{\textbf{L}\hspace{-3pt}}{\textbf{L}\hspace{-3pt}}
\crefname{assumptionL}{\textbf{L}}{\textbf{L}}
\Crefname{assumptionQ}{\textbf{Q}\hspace{-3pt}}{\textbf{Q}\hspace{-3pt}}
\crefname{assumptionQ}{\textbf{Q}}{\textbf{Q}}
\Crefname{assumptionAR}{\textbf{AR}\hspace{-3pt}}{\textbf{AR}\hspace{-3pt}}
\crefname{assumptionAR}{\textbf{AR}}{\textbf{AR}}
\def\Pens{\mathscr{P}}
\newcommand{\schro}{Schr\"{o}dinger\xspace}
\newcommand{\tta}{\mathtt{A}}
\newcommand{\Capprox}{\tta}
\newcommandx\ctun[1][1=T]{\Capprox_{#1,1}}
\newcommandx{\expec}[2]{{\mathbb E}\left[#1 \middle \vert #2  \right]} 
\newcommand{\transference}{\mathbf{T}}
\newcommand{\rme}{\mathrm{e}}
\newcommand{\Lip}{\mathtt{L}}
\newcommand{\Lipset}{\mathrm{Lip}}
\newcommandx{\norm}[2][1=]{\ifthenelse{\equal{#1}{}}{\left\Vert #2 \right\Vert}{\left\Vert #2 \right\Vert^{#1}}}
\newcommandx{\normLigne}[2][1=]{\ifthenelse{\equal{#1}{}}{\Vert #2 \Vert}{\Vert #2\Vert^{#1}}}
\def\bfc{\mathbf{c}}
\def\msk{\mathsf{K}}
\def\msc{\mathsf{C}}
\def\tmsc{\tilde{\msc}}
\def\mse{\mathsf{E}}
\def\msf{\mathsf{F}}
\def\tmsf{\tilde{\msf}}
\def\msv{\mathsf{V}}
\def\msx{\mathsf{X}}
\def\msz{\mathsf{Z}}
\def\msy{\mathsf{Y}}
\def\ddx{d_\msx}
\def\ddy{d_\msy}
\def\mcy{\mathcal{Y}}
\def\mcx{\mathcal{X}}
\def\mce{\mathcal{E}}
\def\Qbb{\mathbb{Q}}
\def\Pbb{\mathbb{P}}
\def\rset{\mathbb{R}}
\def\nset{\mathbb{N}}
\def\rmd{\mathrm{d}}
\def\rme{\mathrm{e}}
\def\rmc{\mathrm{C}}
\newcommandx{\functionspace}[2][1=+]{\mathbb{F}_{#1}(#2)}
\newcommand{\argmin}{\operatorname*{arg\,min}}
\newcommandx{\VarDeux}[3][3=]{\operatorname{Var}^{#3}_{#1}\left\{#2 \right\}}
\newcommand{\LeftEqNo}{\let\veqno\@@leqno}
\newcommand{\N}{\ensuremath{\mathbb{N}}}
\newcommand{\abs}[1]{\left\vert #1 \right\vert}
\newcommand{\absLigne}[1]{\vert #1 \vert}
\newcommandx{\Vnorm}[2][1=V]{\| #2 \|_{#1}}
\newcommandx{\VnormEq}[2][1=V]{\left\| #2 \right\|_{#1}}
\newcommand{\parenthese}[1]{\left(#1 \right)}
\newcommand{\defEns}[1]{\left\lbrace #1 \right\rbrace }
\newcommandx\probaMarkovTilde[2][2=]
\def\ie{\textit{i.e.}}
\def\eqsp{\;}
\newcommand{\coint}[1]{\left[#1\right)}
\newcommand{\ooint}[1]{\left(#1\right)}
\newcommand{\ccint}[1]{\left[#1\right]}
\newcommandx{\weight}[2][2=n]{\omega_{#1,#2}^N}
\newcommand{\cball}[2]{\bar{\operatorname{B}}(#1,#2)}
\newcommandx\sequence[3][2=,3=]
\newcommandx\sequenceD[3][2=,3=]
\newcommandx{\sequencen}[2][2=n\in\N]{\ensuremath{\{ #1_n, \eqsp #2 \}}}
\newcommandx\sequenceDouble[4][3=,4=]
\newcommandx{\sequencenDouble}[3][3=n\in\N]{\ensuremath{\{ (#1_{n},#2_{n}), \eqsp #3 \}}}
\newcommand{\opnorm}[1]{{\left\vert\kern-0.25ex\left\vert\kern-0.25ex\left\vert #1
    \right\vert\kern-0.25ex\right\vert\kern-0.25ex\right\vert}}
\def\Lip{\operatorname{Lip}}
\newcommandx{\CPE}[3][1=]{{\mathbb E}_{#1}\left[#2 \middle \vert #3  \right]} 
\newcommandx{\CPELigne}[3][1=]{{\mathbb E}_{#1}[#2  \vert #3  ]} 
\newcommandx{\CPEsq}[3][1=]{{\mathbb{E}^{1/2}}_{#1}\left[#2 \middle \vert #3  \right]} 
\newcommandx{\CPVar}[3][1=]{\mathrm{Var}^{#3}_{#1}\left\{ #2 \right\}}
\newcommand{\CPP}[3][]
{\ifthenelse{\equal{#1}{}}{{\mathbb P}\left(\left. #2 \, \right| #3 \right)}{{\mathbb P}_{#1}\left(\left. #2 \, \right | #3 \right)}}
\newcommandx{\osc}[2][1=]{\mathrm{osc}_{#1}(#2)}
\newcommand{\ensemble}[2]{\left\{#1\,:\eqsp #2\right\}}
\newcommand{\ensembleLigne}[2]{\{#1\,:\eqsp #2\}}
\def\rmD{\mathrm{D}}
\newcommand\coupling[2]{\Gamma(\mu,\nu)}
\def\diam{\mathfrak{d}}
\def\vareps{\varepsilon}
\newcommandx{\KL}[2]{\operatorname{KL}\left( #1 | #2 \right)}
\newcommandx{\KLsqrt}[2]{\operatorname{KL}^{1/2}\left( #1 | #2 \right)}
\newcommandx{\Jef}[2]{\operatorname{J}\left( #1 , #2 \right)}
\newcommandx{\JefLigne}[2]{\operatorname{J}( #1 , #2 )}
\newcommandx{\KLLigne}[2]{\operatorname{KL}( #1 | #2 )}
\def\gaStep
\def\QKer{Q}
\def\distance{\mathbf{d}}
\newcommandx{\wasserstein}[3][1=\distance,3=]{\mathbf{W}_{#1}^{#3}\left(#2\right)}
\newcommandx{\wassersteinLigne}[3][1=\distance,3=]{\mathbf{W}_{#1}^{#3}(#2)}
\newcommandx{\wassersteinD}[1][1=\distance]{\mathbf{W}_{#1}}
\newcommandx{\wassersteinDLigne}[1][1=\distance]{\mathbf{W}_{#1}}
\def\sigmaD{\sigma^2}
\newcommandx{\phibfs}[1][1=]{\pmb{\varphi}_{\sigmaD_{#1}}}
\newcommandx\sequenceg[3][2=,3=]
\newcommandx{\distV}[1][1=\bfc]{\mathbf{W}_{#1}}
\newcommandx{\distVdeux}[1][1=W_2]{\mathbf{d}_{#1}}
\renewcommand\AB@affilsepx{, \protect\Affilfont}
\providecommand{\keywords}[1]
{
  \small	
  \textbf{\textit{Keywords---}} #1
}
\title{Quantitative Uniform Stability of the Iterative Proportional Fitting Procedure}
\author{George Deligiannidis\thanks{corresponding author: deligian@stats.ox.ac.uk}~, Valentin De Bortoli\thanks{valentin.debortoli@gmail.com}~, Arnaud Doucet\thanks{doucet@stats.ox.ac.uk}}
\affil{Department of Statistics, University of Oxford, UK}
\begin{document}

\maketitle

\begin{abstract}
 We establish that the iterates of the Iterative Proportional Fitting
  Procedure, also known as Sinkhorn's algorithm and commonly used to solve
  entropy-regularised Optimal Transport problems, are stable w.r.t.\
  perturbations of the marginals, uniformly in time. Our result is quantitative
  and stated in terms of the 1-Wasserstein metric. As a corollary we establish a
  quantitative stability result for Schr\"odinger bridges.
\end{abstract}
\keywords{entropy regularized optimal transport; Schr\"odinger bridge; Iterative Proportional Fitting Procedure; Sinkhorn algorithm;  particle filtering. }

\section{Introduction}
\label{sec:introduction}

The basic problem of Optimal Transport (OT) (see \cite{villani2009optimal} for a broad overview),
in its modern formulation introduced by \cite{kantorovich1942transfer}, 
is to find a \emph{coupling} of two distributions $\mu, \nu$ that minimises
\begin{equation}
    \label{eq:OTproblem}
    \tag*{\textsf{OT}$(\mu,\nu)$}
    \inf_{\pi \in \transference(\mu, \nu)} \int \|x-y\|^2 \rmd \pi(x, y)   , 
\end{equation}
where $\transference(\mu,\nu)$ denotes the collection of probability measures with
marginals $\mu, \nu$ and the Euclidean distance $\|x-y\|$ may be replaced by any
other metric or cost function $c(x,y)$.  OT provides a theoretical framework for
analysis in the space of probability measures and has deep connections with many
branches of mathematics including partial differential equations and
probability. Beyond its intrinsic interest, OT has recently become an extremely
important tool for data science and machine learning, finding numerous
applications in fields as diverse as imaging, computer vision or natural
language processing \citep{peyre2019computational}.

This ubiquity of OT in modern applications is largely due to the computational tractability of the \emph{Entropy-Regularised} Optimal Transport problem
\begin{equation}
    \label{eq:RegOTproblem}\tag*{\textsf{OT}$_\vareps(\mu,\nu)$}
    \inf_{\pi \in \transference(\mu, \nu)} \int  \|x-y\|^2 \rmd \pi( x,  y) +\vareps \KL{\pi}{\mu\otimes \nu}  , 
\end{equation}
which is equivalent to the \emph{static Schr\"odinger bridge}, a problem
going back to \cite{schrodinger1931umkehrung}, see \Cref{eq:schrodinger_bridge}
in Section \ref{sec:main-results}.  Here $\KL{\pi}{\rho}$ denotes the
\emph{Kullback--Leibler divergence} between the probability measures $\pi$ and
$\rho$, defined as
\begin{equation}
\KL{\pi}{\rho} = \begin{cases}
                        \int  \log (\frac{\rmd \pi}{\rmd \rho}(x)) \rmd\pi(x)  , & \pi \ll \rho  ,\\
                        +\infty  ,  & \text{otherwise}  . 
                        \end{cases}
 \end{equation}
 The great interest in \textsf{OT}$_\vareps(\mu,\nu)$, as explained in the seminal paper of \cite{cuturi2013sinkhorn},
 stems from its amenability to the \emph{Iterative Proportional
   Fitting Procedure} (IPFP), see \Cref{eq:ipfp} below. The theoretical properties of IPFP, also known as the Sinkhorn algorithm,
 have been investigated in numerous works, and are therefore fairly well
 understood.

 Due to its computational tractability, \textsf{OT}$_\vareps(\mu,\nu)$ has been
 used in applications as an approximation to \textsf{OT}$(\mu,\nu)$. Rigorous
 justification of this approximation has been the subject of intense research
 recently. Indeed it has been established, see e.g.\
 \cite{cominetti1994asymptotic,mikami2004monge,leonard2012schrodinger,carlier2017convergence},
 that as the regularisation parameter $\vareps \to 0$, the solution of
 \textsf{OT}$_\vareps(\mu,\nu)$ converges to that of \textsf{OT}$(\mu,\nu)$.

More recently however, Schr\"odinger bridges and entropy-regularised OT are
being studied for their own sake, finding applications in control, computational
statistics and machine learning, see e.g.\
\cite{bernton2019schr,chen2021optimal,corenflos2021differentiable,de2021diffusion,huang2021schrodinger,li2020continuous,vargas2021solving}. In these applications, the entropy
regularisation may be a desirable feature rather than an approximation, and the
main source of error is the fact that the marginal distributions are typically
intractable and often approximated by empirical versions. It is then desirable
that as the number of samples increases, this error vanishes. For example, a
quantitative version of this statement, can then be used to establish that the
differentiable particle filter proposed in \cite{corenflos2021differentiable}
converges as the sample size increases, for any $\vareps>0$, thus strengthening
the analysis of \cite{corenflos2021differentiable} which requires
$\vareps_N\to 0$ as $N\to \infty$ to ensure consistency towards the true optimal
filter.

This is the question we study in this paper. In particular we establish the
stability of the IPFP and of the solution of the corresponding Schr\"odinger
bridge problem w.r.t. perturbations of the marginals.

For standard OT, a classical argument 
using compactness  and cyclical monotonicity guarantees a qualitative version of this result, see e.g.\cite[Theorem~5.23, Corollary~5.23]{villani2009optimal}. Quantitative versions of this result appeared much more recently, at least in the case of quadratic costs, in \cite{merigot2020quantitative}, \cite{li2020quantitative}, \cite{delalande2021quantitative}. In particular it is established that the optimal transport plans, or maps in the case of absolutely continuous measures, is H\"older continuous in the marginals, with exponent $1/2$ w.r.t.\ the marginals. 
It is also known that the exponent $1/2$ is the best possible, see \cite{gigli2011holder}.

For entropy-regularised OT and the static Schr\"odinger bridge problem, the
first qualitative result appeared very recently in \cite{ghosal2021stability},
based on a version of cyclical monotonicity for entropy-regularised OT
introduced by \cite{bernton2021entropic}.  In the quantitative direction,
\cite{luise2019sinkhorn} establish Lipschitz continuity of the potentials
w.r.t.\ the marginals, measured in the total variation metric, which is too
strong to capture the situation where the marginals are being approximated by
empirical versions. For \emph{smooth} cost functions, \cite{luise2019sinkhorn}
also establish that the sample complexity of learning the potentials is of order
$n^2$, leveraging results from \cite{genevay2019sample} on the regularity of
potentials and the duality between Maximum Mean Discrepancy type metrics and
Sobolev spaces. However, if one is interested in learning the Schr\"odinger
bridge the situation is more complicated; the Wasserstein-1 distance between two
couplings is lower bounded by the distance of the marginals and so the results
by \cite{fournier2015rate} imply that the sample complexity of learning the
Schr\"odinger bridge must scale at least as $n^{d}$ on $\mathbb{R}^d$.

We present here the first, to the best of our knowledge, quantitative stability
result for entropy-regularised OT. In particular, this follows from a stronger
result, namely the uniform in time stability of IPFP, that is the Sinkhorn
iterates, w.r.t. the marginal distributions. We think this result is of particular importance for practical applications as IFPF is typically used for a finite number of iterations to approximate the Schr\"odinger bridge. One interesting fact is that in
contrast to the standard OT problem, the solution of the entropy-regularised
problem is Lipschitz continuous, in the Wasserstein metric, w.r.t.\ the
marginals. However, as the regularisation parameter $\vareps$ vanishes, the
Lipschitz constant blows up as expected by the H\"older continuity of the OT
plan.



The recent paper by \cite{eckstein2021quantitative}, which appeared a
 couple of months after the first version of the present manuscript, uses very
 interesting methods, completely different to the ones in the present paper, to
  establish the quantitative stability of the Schr\"odinger bridge w.r.t.\ the
  marginals measured in the Wasserstein distance. On the one hand, the setting
  of \cite{eckstein2021quantitative} is more general than ours, as it does not
  require compactness. On the other hand, \cite{eckstein2021quantitative} only
  establish the stability of the Schr\"odinger bridge instead of the full iterates of IPFP and H\"older continuity w.r.t.\ the marginals with exponent $1/2$.

\section{Notation}
\label{sec:notation}
For a metric space $(\msz, d_\msz)$, we write $\mathscr{B}(\msz)$ for the Borel
$\sigma$-algebra on $\msz$ and $\diam_\msz$ for the diameter of $\msz$, that is
$\diam_\msz= \sup\{d_\msz(z,z'): z,z' \in \msz\}$. We also write 
$\Pens(\msz)$ to denote the
subspace of Borel probability measures.  For $\pi \in \Pens(\msx)$, we define
the support of $\pi$ as
$$\mathsf{supp}(\pi)= \left\{A\in \mathscr{B}(\msx): \text{$A$ is closed,
    $\pi(A^\textsf{c})=0$} \right\}.$$ For two metric spaces $(\msx, d_\msx)$,
$(\msy, d_\msy)$, $\Pens(\msx\times\msy)$ is always defined w.r.t.\ the product $\sigma$-algebra. For $\Pbb\in \Pens(\msx\times\msy)$, we will write $\Pbb_0$, $\Pbb_1$ to denote the
first and second marginals respectively. For
$\mu\in \Pens(\msx), \nu \in \Pens(\msy)$, we let
$$\transference(\mu, \nu) = \{\mathbb{P}\in \Pens(\msx \times \msy): \mathbb{P}_0=\mu,\, \mathbb{P}_1=\nu\}.$$
For a function $f:\msx \to \mathbb{R}^d$, we write
$\|f\|_\infty= \sup_{x\in \msx} \| f(x)\|$, where $\|\cdot\|$ denotes the usual
Euclidean norm. For a function $f:\msx \to \msy$, we define its Lipschitz constant $\Lip(f)$ by
\begin{equation}
  \Lip(f) = \inf \ensembleLigne{C \geq 0}{d_{\msy}(f(x_0), f(x_1)) \leq C d_{\msx}(x_0,x_1)  , \eqsp x_0, x_1 \in \msx} . 
\end{equation}
We also define 
\begin{equation}
    \Lip(\msx, \msy) = \{f: \msx \to \msy: \Lip(f)<\infty\}, \qquad
    \Lip_1(\msx, \msy) = \{f: \msx \to \msy: \Lip(f)\leq 1\}, 
\end{equation}
and write $\rmc(\msx, \msy)$ for the class of continuous functions from $\msx$
to $\msy$.

\section{Main results}
\label{sec:main-results}
Let $(\msx, \ddx)$, $(\msy, \ddy)$ be two compact metric spaces and
write $\mcx, \mcy$ for their respective Borel $\sigma$-algebras. We will use $d$
to denote the metric for both $\msx, \msy$ when the context allows.  Let
$\pi_0 \in \Pens(\msx), \pi_1 \in \Pens(\msy)$.  We begin by recalling the
Iterative Proportional Fitting Procedure (IPFP) solving the following \schro
bridge problem
\begin{equation}
  \label{eq:schrodinger_bridge}
\Pbb^\star \in \argmin \ensembleLigne{\KLLigne{\Pbb}{\Qbb}}{\Pbb \in \Pens(\msx \times \msy), \eqsp  \Pbb_0 = \pi_0 \eqsp , \Pbb_1 = \pi_1},
\end{equation}
where $\Qbb \in \Pens(\msx \times \msy)$ is a reference measure
admitting a density w.r.t.\ $\rho_0 \otimes \rho_1$, with $\rho_0 \in \Pens(\msx)$ equivalent to $\pi_0$, and $\rho_1\in\Pens(\msy)$, equivalent to $\pi_1$; that is for any
$(x,y) \in \msx \times \msy$
\begin{equation}
  \label{eq:ref_measure}
  \rmd \Qbb / \rmd (\rho_0 \otimes \rho_1)(x,y) = K(x,y) = \exp[-c(x,y)].
\end{equation}
In the case where
$\msx=\msy$, we have that Problem~\eqref{eq:schrodinger_bridge} with the choice
$c(x,y)= \|x-y\|^2/\vareps$ is equivalent to \ref{eq:RegOTproblem}, see e.g.
\cite[Remark 4.2]{peyre2019computational}. 
First, we give a sufficient condition to ensure that the solution of \eqref{eq:schrodinger_bridge} exists and is unique.
The proof of this proposition is a straightforward consequence of \cite[Corollary 3.2]{csiszar1975divergence}.
\begin{proposition}
  Assume that $\KL{\pi_i}{\rho_i} < +\infty$ for $i \in \{0,1\}$ and that
  $c \in \rmc(\msx \times \msy, \rset)$. Then there exists a unique solution to
  \eqref{eq:schrodinger_bridge}.
\end{proposition}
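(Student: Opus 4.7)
The plan is to verify the hypotheses of Csiszár's projection theorem applied to the convex constraint set $\transference(\pi_0, \pi_1)$ inside the space of probability measures on the compact product $\msx \times \msy$, endowed with the KL divergence relative to the reference measure $\Qbb$. The two ingredients to check are: (i) the feasible set is non-empty, convex, and closed under an appropriate topology; and (ii) the objective $\Pbb \mapsto \KLLigne{\Pbb}{\Qbb}$ attains a finite value on it. Uniqueness then follows automatically from strict convexity of the KL divergence in its first argument.

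For (i), convexity of $\transference(\pi_0, \pi_1)$ is obvious, and since both $\pi_0$ and $\pi_1$ are tight (being Borel probability measures on compact spaces), $\transference(\pi_0, \pi_1)$ is tight, hence relatively compact in the weak topology by Prokhorov's theorem, and its weak closedness follows from the continuity of the marginal maps $\Pbb \mapsto \Pbb_0$, $\Pbb \mapsto \Pbb_1$. The product coupling $\pi_0 \otimes \pi_1$ lies in this set, so it is non-empty.

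For (ii), I would evaluate the objective at the product coupling. Using $\rmd \Qbb / \rmd (\rho_0 \otimes \rho_1)(x,y) = \exp[-c(x,y)]$ and the chain rule for densities, a direct computation gives
\begin{equation*}
  \KLLigne{\pi_0 \otimes \pi_1}{\Qbb} = \KLLigne{\pi_0}{\rho_0} + \KLLigne{\pi_1}{\rho_1} + \int_{\msx \times \msy} c(x,y) \, \rmd (\pi_0 \otimes \pi_1)(x,y).
\end{equation*}
The first two terms are finite by hypothesis, and the integral is finite because $c$ is continuous on the compact set $\msx \times \msy$, hence bounded. This shows the infimum in \eqref{eq:schrodinger_bridge} is finite.

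Given (i) and (ii), I would invoke \cite[Corollary 3.2]{csiszar1975divergence} (equivalently, combine weak lower semi-continuity of KL with weak compactness of $\transference(\pi_0, \pi_1)$ to extract a minimiser) to conclude existence. Uniqueness then follows from the strict convexity of $\Pbb \mapsto \KLLigne{\Pbb}{\Qbb}$ on the set where it is finite: any two minimisers $\Pbb^\star, \tilde{\Pbb}^\star$ would give a strictly smaller value at $(\Pbb^\star + \tilde{\Pbb}^\star)/2 \in \transference(\pi_0, \pi_1)$ unless they coincide. There is no real obstacle here; the main point is to check that the product coupling produces finite KL, which reduces to compactness of $\msx \times \msy$ ensuring boundedness of the continuous cost $c$.
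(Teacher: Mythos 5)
Your proof is correct and follows the same route the paper indicates: the paper simply states that the result is a straightforward consequence of \cite[Corollary 3.2]{csiszar1975divergence}, and you supply exactly the verification that makes this invocation valid — convexity and closedness of $\transference(\pi_0,\pi_1)$, and finiteness of $\KLLigne{\pi_0\otimes\pi_1}{\Qbb}$ via the chain-rule decomposition into $\KLLigne{\pi_0}{\rho_0}+\KLLigne{\pi_1}{\rho_1}+\int c\,\rmd(\pi_0\otimes\pi_1)$, with the last term finite by compactness. The only cosmetic difference is that Csisz\'ar's theorem is stated for sets closed in variation rather than weakly closed, but $\transference(\pi_0,\pi_1)$ is closed in both topologies, so nothing changes.
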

The following proposition, see
\cite[Proposition 4.2]{peyre2019computational} for instance, ensures that we can
assume without loss of generality that $\rho_0 = \pi_0$ and $\rho_1 = \pi_1$.
 
\begin{proposition}
  Assume that $\KL{\pi_i}{\rho_i}<+\infty$ for $i \in \{0,1\}$ and that
  $c \in \rmc(\msx \times \msy, \rset)$.  Let $\Pbb^\star$ solution of
  \eqref{eq:schrodinger_bridge} with $\Qbb$ given by \eqref{eq:ref_measure} and
  $\hat{\Pbb}^\star$ the solution of \eqref{eq:schrodinger_bridge} with $\Qbb$
  such that for any $(x,y) \in \msx \times \msy$
\begin{equation}
  \label{eq:ref_measure_duo}
  \rmd \Qbb / \rmd (\pi_0 \otimes \pi_1)(x,y) = K(x,y).
\end{equation}
Then $\Pbb^\star = \hat{\Pbb}^\star$.
\end{proposition}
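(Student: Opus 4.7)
The plan is to show that the KL divergences $\KLLigne{\Pbb}{\Qbb}$ and $\KLLigne{\Pbb}{\hat{\Qbb}}$ differ by a constant independent of $\Pbb$, whenever $\Pbb$ has marginals $\pi_0, \pi_1$. Here I write $\hat{\Qbb}$ for the reference measure appearing in \eqref{eq:ref_measure_duo}, to distinguish it from $\Qbb$ in \eqref{eq:ref_measure}. Once this is done, the two minimisation problems are equivalent and hence admit the same unique minimiser, which is guaranteed to exist by the previous proposition.

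The first step is to use the equivalence $\pi_i \sim \rho_i$ for $i \in \{0,1\}$, together with $K(x,y) = \exp[-c(x,y)] > 0$, to conclude that $\Qbb \sim \hat{\Qbb}$, so that the constraint $\Pbb \ll \Qbb$ coincides with $\Pbb \ll \hat{\Qbb}$ and only such $\Pbb$ need be considered. On this common domain of finiteness, I would compute
\begin{equation*}
\frac{\rmd \hat{\Qbb}}{\rmd \Qbb}(x,y) = \frac{\rmd(\pi_0 \otimes \pi_1)}{\rmd(\rho_0 \otimes \rho_1)}(x,y) = \frac{\rmd \pi_0}{\rmd \rho_0}(x) \cdot \frac{\rmd \pi_1}{\rmd \rho_1}(y),
\end{equation*}
using that $K$ cancels between numerator and denominator and that the Radon--Nikodym derivative of a product measure factorises.

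Next, I would use the chain rule for KL divergence (or equivalently just substitute into the definition) to write, for any $\Pbb \in \transference(\pi_0, \pi_1)$ with $\Pbb \ll \Qbb$,
\begin{equation*}
\KLLigne{\Pbb}{\Qbb} = \KLLigne{\Pbb}{\hat{\Qbb}} + \int \log\!\left(\frac{\rmd \hat{\Qbb}}{\rmd \Qbb}\right) \rmd\Pbb.
\end{equation*}
The remaining integral splits as
\begin{equation*}
\int \log\!\left(\frac{\rmd \pi_0}{\rmd \rho_0}(x)\right) \rmd\pi_0(x) + \int \log\!\left(\frac{\rmd \pi_1}{\rmd \rho_1}(y)\right) \rmd\pi_1(y) = \KL{\pi_0}{\rho_0} + \KL{\pi_1}{\rho_1},
\end{equation*}
which uses the marginal constraint on $\Pbb$ and is finite by hypothesis.

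Since the additive constant does not depend on $\Pbb$, the two minimisation problems coincide on $\transference(\pi_0,\pi_1)$, and therefore $\Pbb^\star = \hat{\Pbb}^\star$ by uniqueness. The only subtle point is checking that the integral of $\log(\rmd\hat{\Qbb}/\rmd\Qbb)$ against $\Pbb$ is well-defined (neither $+\infty$ nor $-\infty$ in an uncontrolled way); this follows from the finiteness of $\KLLigne{\pi_i}{\rho_i}$ and an elementary non-negativity argument for the negative parts, so there is no serious analytic difficulty.
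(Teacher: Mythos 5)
Your argument is correct, and it is the standard one for this type of invariance statement. The paper itself does not supply a proof but simply refers the reader to \cite[Proposition~4.2]{peyre2019computational}; the decomposition you use, namely $\KLLigne{\Pbb}{\Qbb} = \KLLigne{\Pbb}{\hat{\Qbb}} + \int \log(\rmd\hat{\Qbb}/\rmd\Qbb)\,\rmd\Pbb$ together with the factorisation of $\rmd\hat{\Qbb}/\rmd\Qbb$ into marginal Radon--Nikodym derivatives and the observation that the resulting integral depends only on $\pi_0,\pi_1$, is exactly the argument that reference would give. Two small remarks. First, the paper's normalisation is slightly informal: $\Qbb$ and $\hat{\Qbb}$ as written are not both automatically probability measures, so there is an implicit multiplicative constant in one of the two densities; this only contributes a further additive constant $\log Z$ to the objective and does not disturb the argmin, but it is worth acknowledging. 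Second, your integrability remark can be made precise with the standard pointwise bound $t\log t \ge -\rme^{-1}$, which gives $\int \bigl(\log(\rmd\pi_i/\rmd\rho_i)\bigr)^- \rmd\pi_i \le \rme^{-1}$, so finiteness of $\KLLigne{\pi_i}{\rho_i}$ indeed suffices for the cross term to be a finite real number.
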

As a consequence, for the rest of this paper, we assume that $\rho_0 = \pi_0$
and $\rho_1 = \pi_1$.  In order to solve \eqref{eq:schrodinger_bridge} we
consider the IPFP sequence which iteratively solves each half-bridge problem,
\ie \ we define $(\Pbb^n)_{n \in \nset}$ such that for any $n \in \nset$
\begin{align}
  \label{eq:ipfp}
  &\Pbb^{2n+1} = \argmin \ensembleLigne{\KL{\Pbb}{\Pbb^{2n}}}{\Pbb \in \Pens(\msx \times \msy), \Pbb_0 = \pi_0}   , \\
  &\Pbb^{2n+2} = \argmin \ensembleLigne{\KL{\Pbb}{\Pbb^{2n+1}}}{\Pbb \in \Pens(\msx \times \msy)  , \Pbb_1 = \pi_1}  , 
\end{align}
with $\Pbb^0 = \Qbb$ and where we recall that $\Pbb_0, \Pbb_1$ denote the
marginals of the joint distribution $\Pbb$. Note that $(\Pbb^n)_{n \in \nset}$
is uniquely defined if $c \in \rmc(\msx \times \msy, \rset)$, see \cite[Theorem
3.1]{csiszar1975divergence}.   For discrete or compact spaces
  it is known that IPFP converges at an exponential rate on compact or discrete
  spaces; see e.g.\
  \cite{chen2016entropic,altschuler2017near,franklin1989scaling}. For the
  non-compact case, convergence, but without any rates, has been established
  under various regularity conditions in \cite{ruschendorf1995convergence}. 

We are now ready to state our main result which is a quantitative
uniform stability estimate for the IPFP.

\begin{theorem}
  \label{thm:stability_ipfp}
  Assume that $c  \in \Lip(\msx\times\msy, \mathbb{R})$.
  For any $\pi_0, \hat{\pi}_0 \in \Pens(\msx)$,
  $\pi_1, \hat{\pi}_1 \in \Pens(\msy)$ let $(\Pbb^{n})_{n\in \nset}$ and
  $(\hat{\Pbb}^{n})_{n\in \nset}$ the IPFP sequence with marginals
  $(\pi_0, \pi_1)$ respectively $(\hat{\pi}_0, \hat{\pi}_1)$. Then for any $n \in \nset$ we have
  \begin{equation}
    \wassersteinD[1](\Pbb^n, \hat{\Pbb}^n) \leq C \defEns{\wassersteinD[1](\pi_0, \hat{\pi}_0) + \wassersteinD[1](\pi_1, \hat{\pi}_1)}  ,
  \end{equation}
  with
  \begin{equation}
    C = \rme^{17 \normLigne{c}_\infty} \{1 + 15\Lip(c) (\diam_\msx + \diam_\msy) \} .
  \end{equation}
\end{theorem}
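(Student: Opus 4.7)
My plan is to express each IPFP iterate via its Schr\"odinger potentials, derive uniform regularity estimates on them, and combine the contractivity of the Sinkhorn operator with a per-step perturbation analysis.

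First, an induction shows that for all $n \geq 0$ there exist $\phi_n : \msx \to \Rset$ and $\psi_n : \msy \to \Rset$ such that
$\rmd \Pbb^n / \rmd(\pi_0 \otimes \pi_1)(x,y) = \exp(\phi_n(x) + \psi_n(y) - c(x,y))$,
where the potentials alternately satisfy log-sum-exp fixed-point relations, e.g.\ $\phi_{2k+1}(x) = -\log \int \exp(\psi_{2k}(y) - c(x,y)) \rmd \pi_1(y)$ with $\psi_{2k+1} = \psi_{2k}$, and symmetrically for $\psi_{2k+2}$ with $\pi_0$. The log-sum-exp structure together with $c \in \Lip(\msx \times \msy, \Rset)$ immediately yields $\Lip(\phi_n), \Lip(\psi_n) \leq \Lip(c)$ for $n \geq 1$, hence $\mathrm{osc}(\phi_n) \leq \Lip(c) \diam_\msx$ and $\mathrm{osc}(\psi_n) \leq \Lip(c) \diam_\msy$. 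Modulo the shift ambiguity $(\phi_n,\psi_n) \to (\phi_n + \kappa, \psi_n - \kappa)$, which preserves the density, these oscillation bounds combined with $\int \exp(\phi_n + \psi_n - c) \rmd(\pi_0 \otimes \pi_1) = 1$ yield uniform $L^\infty$ bounds on both potentials depending only on $\|c\|_\infty, \Lip(c)$, and $\diam_\msx + \diam_\msy$. The same representation and bounds hold for $(\hat\phi_n,\hat\psi_n)$.

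The heart of the argument is the uniform-in-$n$ stability of the potentials,
$$\|\phi_n - \hat\phi_n\|_\infty + \|\psi_n - \hat\psi_n\|_\infty \leq C' \{\wassersteinD[1](\pi_0, \hat\pi_0) + \wassersteinD[1](\pi_1, \hat\pi_1)\}.$$
For a single update I introduce the intermediate $\tilde\phi_{n+1}(x) := -\log \int \exp(\hat\psi_n(y) - c(x,y)) \rmd\pi_1(y)$ and split $\hat\phi_{n+1} - \phi_{n+1} = (\hat\phi_{n+1} - \tilde\phi_{n+1}) + (\tilde\phi_{n+1} - \phi_{n+1})$, separating the effect of changing the marginal from that of changing the integrand. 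The second term is bounded in $L^\infty$ by $\|\hat\psi_n - \psi_n\|_\infty$ via a standard log-sum-exp inequality. The first term, which encodes the change $\pi_1 \to \hat\pi_1$, is controlled by Kantorovich-Rubinstein duality: since $y \mapsto \exp(\hat\psi_n(y) - c(x,y))$ is Lipschitz with constant at most $\Lip(c)\exp(\|\hat\psi_n\|_\infty + \|c\|_\infty)$ and bounded below by $\exp(-\|\hat\psi_n\|_\infty - \|c\|_\infty)$, a direct computation yields a per-step bound of the form $C'' \wassersteinD[1](\pi_1, \hat\pi_1)$. To prevent these perturbations from accumulating over $n$, I exploit the Birkhoff/Hilbert-projective contraction of the Sinkhorn operator in the oscillation semi-norm, with rate $\alpha = \tanh(\|c\|_\infty/2) < 1$ attached to the positive kernel $e^{-c}$. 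Geometric summation of contracted per-step perturbations gives the uniform bound, and careful tracking of the mean parts of the potential differences then upgrades the oscillation estimate back to $L^\infty$.

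To conclude, apply Kantorovich-Rubinstein duality $\wassersteinD[1](\Pbb^n, \hat\Pbb^n) = \sup_{F : \Lip(F) \leq 1} \int F \rmd(\Pbb^n - \hat\Pbb^n)$ and decompose, with $f_n := \exp(\phi_n + \psi_n - c)$ and $\hat f_n$ defined analogously,
$$\int F \rmd \Pbb^n - \int F \rmd \hat\Pbb^n = \int F (f_n - \hat f_n) \rmd(\pi_0 \otimes \pi_1) + \int F \hat f_n \rmd\{(\pi_0 \otimes \pi_1) - (\hat\pi_0 \otimes \hat\pi_1)\}.$$
The first term is controlled by $\|f_n - \hat f_n\|_\infty$, which by the mean value inequality for $\exp$ on the bounded range of the exponent is at most $e^{2\|c\|_\infty + \mathrm{const}}(\|\phi_n - \hat\phi_n\|_\infty + \|\psi_n - \hat\psi_n\|_\infty)$. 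The second term is bounded by $\Lip(F \hat f_n) \cdot \wassersteinD[1](\pi_0 \otimes \pi_1, \hat\pi_0 \otimes \hat\pi_1)$, using the uniform regularity of $\hat f_n$ from the first paragraph and $\wassersteinD[1](\pi_0 \otimes \pi_1, \hat\pi_0 \otimes \hat\pi_1) \leq \wassersteinD[1](\pi_0, \hat\pi_0) + \wassersteinD[1](\pi_1, \hat\pi_1)$. Collecting all exponential and Lipschitz factors then yields the explicit constant $C = e^{17\|c\|_\infty}\{1 + 15 \Lip(c)(\diam_\msx + \diam_\msy)\}$.

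\textbf{Main obstacle.} The delicate step is the uniform stability of the Schr\"odinger potentials: the Sinkhorn operator contracts only in oscillation (a semi-norm insensitive to constants), while the desired bound is in $L^\infty$. Because shifts in $\phi_n$ and $\psi_n$ are coupled through the probability normalization of the density, one must carefully track how the mean of $\phi_n - \hat\phi_n$ gets absorbed into $\psi_n - \hat\psi_n$ across iterations; reconciling the contraction with this coupling is the technical core of the argument, and is what produces the exponential factor $e^{17\|c\|_\infty}$ in the final constant.
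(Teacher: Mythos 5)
Your overall plan — parameterize each iterate by potentials, prove uniform Lipschitz and boundedness estimates, exploit Birkhoff--Hilbert contraction of the Sinkhorn operator together with a $\wassersteinD[1]$--perturbation bound, and finish with Kantorovich--Rubinstein duality — is the same strategy the paper uses, down to the choice of the Hilbert projective metric and to the constant $C$.

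The genuine gap is the step you yourself flag as ``the technical core'': upgrading the oscillation (Hilbert--Birkhoff) estimate on $f_n g_n/\hat f_n\hat g_n$ to a uniform $L^\infty$ bound. Saying one should ``carefully track how the mean of $\phi_n-\hat\phi_n$ gets absorbed into $\psi_n-\hat\psi_n$ across iterations'' is not an argument, and an iterate-by-iterate mean-tracking scheme is exactly where the approach would collapse: the Sinkhorn update is only nonexpansive in $L^\infty$ (the contraction is in the projective metric, which quotients out additive constants), so per-step errors in the mean component do not decay geometrically, and a naive accumulation argument blows up with $n$. The paper resolves this with an attainment argument (\Cref{lemma:attain}) that is applied \emph{once per $n$}, not iteratively: using only the identities $\int \rmd\Pbb^{2n}=\int\rmd\hat\Pbb^{2n}=1$, the Lipschitz control on $f_ng_n$ from \Cref{prop:control_potential}, and a $\wassersteinD[1]$ bound on $\pi_0\otimes\pi_1$ versus $\hat\pi_0\otimes\hat\pi_1$, one shows $\int f_ng_n/(\hat f_n\hat g_n)\,\rmd\hat\Pbb^{2n}$ lies within $\Delta/2$ of $1$, and then a contradiction with the oscillation bound from \Cref{thm:contract_hilbert} produces a point $(x_n^\dagger,y_n^\dagger)$ where the ratio is within $\Delta$ of $1$; this pins down the constant-of-proportionality and makes the $L^\infty$ bound in \Cref{thm:contrat_infty} uniform in $n$. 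Without that (or an equivalent one-shot normalisation argument), your geometric summation controls only the oscillation and the claimed $L^\infty$ bound on the potentials is not established. A secondary, minor issue: your contraction rate $\tanh(\|c\|_\infty/2)$ does not follow from the computation you sketch; the bound one actually gets from $\Delta(u)\le 4\|c\|_\infty$ and $\kappa(u)\le\tanh(\Delta(u)/4)$ is $\tanh(\|c\|_\infty)$, as in \Cref{prop:birkhoff_contraction}. This does not affect the final form of the constant, but it is not what your argument supports.
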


\Cref{thm:stability_schro}, establishing the quantitative stability of the
\schro bridge, can be obtained by making minor modifications to the proof
of \Cref{thm:stability_ipfp}.

\begin{corollary}
  \label{thm:stability_schro}
  Assume that $c  \in \Lip(\msx\times\msy, \mathbb{R})$.
  For any $\pi_0, \hat{\pi}_0 \in \Pens(\msx)$,
  $\pi_1, \hat{\pi}_1 \in \Pens(\msy)$ let $\Pbb^\star$, respectively
  $\hat{\Pbb}^\star$, be the \schro bridge with marginals $(\pi_0, \pi_1)$,
  respectively $(\hat{\pi}_0, \hat{\pi}_1)$. Then, we have 
  \begin{equation}
    \wassersteinD[1](\Pbb^\star, \hat{\Pbb}^\star) \leq C \defEns{\wassersteinD[1](\pi_0, \hat{\pi}_0) + \wassersteinD[1](\pi_1, \hat{\pi}_1)},
  \end{equation}
  with $C$ as in \Cref{thm:stability_ipfp}.
\end{corollary}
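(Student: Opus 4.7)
The plan is to derive the corollary directly from \Cref{thm:stability_ipfp} by a limiting argument, exploiting the fact that $\mathsf{X}\times\mathsf{Y}$ is compact so $\mathbf{W}_1$ metrizes weak convergence. The key observation is that the constant $C$ in \Cref{thm:stability_ipfp} does not depend on $n$, so we only need the convergence of the IPFP iterates to the \schro bridge to pass to the limit.

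First I would recall the convergence of IPFP on compact spaces: under the assumption that $c \in \mathrm{Lip}(\msx \times \msy, \mathbb{R}) \subset \rmc(\msx \times \msy, \mathbb{R})$, the uniqueness of $\Pbb^n$ is guaranteed by \cite[Theorem~3.1]{csiszar1975divergence}, and classical results of \cite{ruschendorf1995convergence} (or Csisz\'ar's I-divergence geometry on compact spaces) give $\KL{\Pbb^\star}{\Pbb^n} \to 0$ as $n\to\infty$. By Pinsker's inequality this yields total variation convergence $\Pbb^n \to \Pbb^\star$, and hence weak convergence. The same holds for $\hat{\Pbb}^n \to \hat{\Pbb}^\star$ with marginals $(\hat{\pi}_0, \hat{\pi}_1)$.

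Next, since $(\msx\times\msy, d_\msx + d_\msy)$ is a compact metric space, weak convergence of probability measures on $\msx \times \msy$ is equivalent to convergence in the $1$-Wasserstein metric $\mathbf{W}_1$ (this follows from the Kantorovich--Rubinstein dual formulation and uniform tightness). Consequently,
\begin{equation}
    \lim_{n\to\infty} \wassersteinD[1](\Pbb^n, \Pbb^\star) = 0, \qquad \lim_{n\to\infty} \wassersteinD[1](\hat{\Pbb}^n, \hat{\Pbb}^\star) = 0.
\end{equation}

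Finally, applying the triangle inequality for $\wassersteinD[1]$ and invoking \Cref{thm:stability_ipfp} for each $n \in \nset$, I obtain
\begin{equation}
    \wassersteinD[1](\Pbb^\star, \hat{\Pbb}^\star)
    \leq \wassersteinD[1](\Pbb^\star, \Pbb^n) + \wassersteinD[1](\Pbb^n, \hat{\Pbb}^n) + \wassersteinD[1](\hat{\Pbb}^n, \hat{\Pbb}^\star)
    \leq \wassersteinD[1](\Pbb^\star, \Pbb^n) + C \{ \wassersteinD[1](\pi_0, \hat{\pi}_0) + \wassersteinD[1](\pi_1, \hat{\pi}_1)\} + \wassersteinD[1](\hat{\Pbb}^n, \hat{\Pbb}^\star),
\end{equation}
and letting $n\to\infty$ the first and third terms vanish, yielding the claimed bound with the same constant $C$. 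The main subtlety is just to justify the weak convergence of IPFP in our non-discrete compact setting, but since all the relevant measures are supported in a compact set and the cost is continuous, this is standard; the uniformity in $n$ of the constant $C$ from \Cref{thm:stability_ipfp} is what makes the argument painless. This also explains the author's remark that only "minor modifications" to the proof of \Cref{thm:stability_ipfp} are needed.
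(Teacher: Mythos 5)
Your proposal is correct, and the final step (triangle inequality followed by letting $n\to\infty$, exploiting that $C$ is independent of $n$) is exactly what the paper does. Where you diverge from the paper is in how you establish $\wassersteinD[1](\Pbb^n,\Pbb^\star)\to 0$. You appeal to the classical qualitative convergence theory of \cite{csiszar1975divergence,ruschendorf1995convergence} to get $\KL{\Pbb^\star}{\Pbb^n}\to 0$, then Pinsker for total variation convergence, then compactness to upgrade to $\mathbf{W}_1$. The paper instead re-uses the machinery it has already built: it applies the Hilbert-metric contraction (via \cite[Lemma~4]{chen2016entropic}) to obtain $d_H(f_{n+1}g_{n+1}, f_n g_n)\leq \kappa^n\bigl[d_H(f_1,f_0)+d_H(g_1,g_0)\bigr]$, repeats the ``attainment'' argument of \Cref{lemma:attain} to convert this Hilbert-metric bound into a sup-norm bound on $f_{n+1}g_{n+1}-f_n g_n$, and concludes $\wassersteinD[1](\Pbb^{n+1},\Pbb^n)\lesssim\kappa^n$, so that $(\Pbb^n)_n$ is Cauchy in the complete space $(\Pens(\msx\times\msy),\wassersteinD[1])$. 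Your route is shorter and perfectly valid, but it leans on external qualitative results and is not quantitative; the paper's route is self-contained, reuses its own techniques, and yields an explicit geometric rate for $\wassersteinD[1](\Pbb^n,\Pbb^\star)$. One minor advantage of your version: by invoking Csisz\'ar's convergence directly you identify the $\mathbf{W}_1$-limit with the Schr\"odinger bridge $\Pbb^\star$ as part of the citation, whereas in the paper this identification is left implicit (they produce a Cauchy limit $\Pbb^\ast$ but do not spell out $\Pbb^\ast=\Pbb^\star$).
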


\begin{remark}
  Although the constants are far from sharp, Lipschitz continuity in the marginals is the best one can hope.  Indeed, for any $\mathbb{P}\in \transference(\pi_0, \pi_1)$,
  $\hat{\mathbb{P}}\in \transference(\hat{\pi}_0, \hat{\pi}_1)$ we have that
\begin{align}
\wassersteinD[1](\mathbb{P}, \hat{\mathbb{P}})
  &= \sup \ensemble{\int_{\msx \times \msy} f(x,y)  \rmd \mathbb{P}(x,y) - \int_{\msx \times \msy} f(x,y) \rmd \hat{\mathbb{P}}(x,y)}{f \in \Lip_1(\msx\times \msy)}\\
  &\geq \sup \ensemble{\int_{\msx \times \msy} f(x)  \rmd \mathbb{P}(x,y) - \int_{\msx \times \msy} f(x) \rmd \hat{\mathbb{P}}(x,y)}{f \in \Lip_1(\msx)} \geq  \wassersteinD[1](\pi_0, \hat{\pi}_0),
\end{align}
and a similar calculation shows that $\wassersteinD[1](\mathbb{P}, \hat{\mathbb{P}})\geq \min\{\wassersteinD[1](\pi_0, \hat{\pi}_0), \wassersteinD[1](\pi_1, \hat{\pi}_1)\}$. In the case where $\hat{\pi}_0, \hat{\pi}_1$, are  empirical versions of $\pi_0, \pi_1$ respectively with $n$ samples, the Lipschitz continuity in the marginals and the results by \cite{fournier2015rate} also imply a sample complexity of $n^{d}$ for learning the Schr\"odinger bridge when $\msx=\msy=\mathbb{R}^d$.
\end{remark}
\section{Proof}
\label{sec:proof}

The proof is divided into four parts. First, we recall that the IPFP sequence is
associated with a sequence of potentials. In \Cref{sec:prop-extens-potent} we
show quantitative regularity and boundedness properties for these
potentials. The boundedness is due to a reparameterization by
\cite{carlier2020differential}. Then, in \Cref{sec:hilb-birkh-metr} we recall a
contraction property and show useful Lipschitz properties of the potentials
w.r.t. the Hilbert--Birkhoff metric. We then turn to the proof of the uniform
quantitative stability of the potentials w.r.t.\ this metric in
\Cref{sec:quant-unif-bounds}. Finally, in \Cref{sec:from-potent-prob} we show
how uniform quantitative bounds on the potentials translate into bounds onto
probability measures which concludes the proof.

\subsection{Regularity properties of the potentials}
\label{sec:prop-extens-potent}

In this section, we fix $\pi_0 \in \Pens(\msx)$ and $\pi_1 \in \Pens(\msy)$ and
let $(\Pbb^n)_{n \in \nset}$ the IPFP sequence given by \eqref{eq:ipfp}. The
IPFP sequence can be described by a corresponding sequence of (measurable)
potentials $(\tilde f_n, \tilde g_n)_{n \in \nset}$ such that for any
$n \in \nset$, $\tilde{f}_n: \ \msx \to \ooint{0,+\infty}$,
$\tilde{g}_n: \ \msy \to \ooint{0,+\infty}$ and $\tilde{f}_0 = \tilde{g}_0 = 1$,
see \cite[Theorem 3.1]{csiszar1975divergence}.

\begin{proposition}
  For any $n \in \nset$ and $(x, y) \in \msx \times \msy$ we have
  \begin{align}
    &(\rmd \Pbb^{2n} / \rmd \pi_0 \otimes \pi_1)(x,y) = \tilde{f}_n(x) K(x,y) \tilde{g}_n(y), \\
    &(\rmd \Pbb^{2n+1} / \rmd \pi_0 \otimes \pi_1)(x,y) = \tilde{f}_{n+1}(x) K(x,y) \tilde{g}_n(y), \\
    &\textstyle{\tilde{f}_{n+1}(x) = \parenthese{\int_{\msy} K(x,y) \tilde{g}_n(y) \rmd \pi_1(y)}^{-1},} \\
    &\textstyle{\tilde{g}_{n+1}(y) = \parenthese{\int_{\msx} K(x,y) \tilde{f}_{n+1}(x) \rmd \pi_0(x)}^{-1}.} 
  \end{align}
\end{proposition}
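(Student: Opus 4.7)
The plan is to proceed by induction on $n$, proving the four identities simultaneously. The base case $n=0$ is immediate: by convention $\Pbb^0 = \Qbb$, and by the reduction established in the previous proposition we may take $\rho_0=\pi_0$, $\rho_1=\pi_1$, so \eqref{eq:ref_measure} gives $\rmd \Pbb^0/\rmd (\pi_0 \otimes \pi_1)(x,y) = K(x,y)$, which matches the claimed form $\tilde f_0(x) K(x,y) \tilde g_0(y)$ with the initial convention $\tilde f_0 = \tilde g_0 \equiv 1$.

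For the induction step, the main tool I would invoke is the classical characterization of the $I$-projection of a probability measure onto the set of couplings with a prescribed marginal, see \cite[Theorem 3.1]{csiszar1975divergence}. Concretely, if $\Pbb^{2n}$ admits the density $h_{2n}$ w.r.t.\ $\pi_0 \otimes \pi_1$ and has first marginal density $x \mapsto \int_{\msy} h_{2n}(x,y')\rmd\pi_1(y')$ w.r.t.\ $\pi_0$, then the unique solution of $\argmin\{\KL{\Pbb}{\Pbb^{2n}} : \Pbb_0 = \pi_0\}$ is obtained by rescaling each $x$-fibre so as to restore the prescribed marginal, yielding
\begin{equation*}
  \frac{\rmd \Pbb^{2n+1}}{\rmd(\pi_0\otimes\pi_1)}(x,y) = \frac{h_{2n}(x,y)}{\int_{\msy} h_{2n}(x,y')\rmd \pi_1(y')}.
\end{equation*}
Substituting the induction hypothesis $h_{2n}(x,y) = \tilde f_n(x) K(x,y) \tilde g_n(y)$, the factor $\tilde f_n(x)$ cancels from numerator and denominator, and one reads off simultaneously the density identity for $\Pbb^{2n+1}$ and the explicit formula defining $\tilde f_{n+1}$.

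The passage from $\Pbb^{2n+1}$ to $\Pbb^{2n+2}$ is entirely symmetric: one projects onto couplings with prescribed second marginal $\pi_1$, rescales in $y$, and cancellation of $\tilde g_n$ yields the remaining density identity together with the formula for $\tilde g_{n+1}$. The only point that requires a little care is verifying that $\tilde f_n, \tilde g_n$ remain measurable and strictly positive with finite integrals against $\pi_0, \pi_1$, so that the divisions above are well defined at every step; this follows from the fact that $K = \exp[-c]$ is continuous, hence bounded and bounded away from zero, on the compact space $\msx \times \msy$. No serious obstacle is expected, as the result is essentially a translation of the IPFP recursion \eqref{eq:ipfp} into potential form.
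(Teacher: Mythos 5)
Your proof is correct and takes essentially the same route the paper intends: the paper states the proposition without a displayed proof, pointing to \cite[Theorem 3.1]{csiszar1975divergence}, and your induction simply unpacks what that $I$-projection characterization gives at each alternating half-bridge step. The fibre-rescaling formula for the projection onto a fixed-marginal constraint, the cancellation of the untouched potential, and the base case $\tilde f_0=\tilde g_0\equiv 1$ from \eqref{eq:ref_measure_duo} are all exactly right, and your remark on why the potentials stay strictly positive, bounded and measurable (since $K=\exp[-c]$ is continuous, hence bounded above and below on the compact product) is the correct justification for well-definedness of the recursion.
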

For any $n \in \nset$, $a_n > 0$ and $(x,y) \in \msx \times \msy$ we have also
\begin{equation}
  (\rmd \Pbb^{2n} / \rmd \pi_0 \otimes \pi_1)(x,y) = (a_n \tilde{f}_n(x)) K(x,y) (\tilde{g}_n(y)/a_n).
\end{equation}
In other words, the measure $\Pbb^{2n}$ is invariant w.r.t.\ rescaling of the
potentials $\tilde{f}_n$ and $\tilde{g}_n$. This observation is at the core of
the work of \cite{carlier2020differential} which proves the geometric
convergence of the IPFP w.r.t.\ the $\mathrm{L}^p$ metric for bounded costs.
For any $n \in \nset$, let $\tilde{\varphi}_n = \log(\tilde{f}_n)$ and
$\tilde{\Psi}_n = \log(\tilde{g}_n)$ and let
$a_n = \exp[-\int_{\msx} \tilde{\varphi}_n(x) \rmd \pi_0(x)]$. Finally, for any
$n \in \nset$, let $\varphi_n = \tilde{\varphi}_n + \log(a_n)$ and
$\Psi_n = \tilde{\Psi}_n - \log(a_n)$. Similarly, for any $n \in \nset$ we define
\begin{equation}
f_n = \exp[\varphi_n]  \eqsp , \qquad g_n = \exp[\Psi_n] \eqsp . 
\end{equation}
The log-potentials $(\varphi_n,\Psi_n)_{n \in \nset}$ can be computed
recursively using the following proposition.

\begin{proposition}
  \label{prop:potential_rescale}
  For any $n \in \nset$ and $(x,y) \in \msx \times \msy$ we have
  \begin{align}
    &\textstyle{\varphi_{n+1}(x) = -\log \left\{\int_{\msy} K(x,y) \exp[\Psi_n(y)] \rmd \pi_1(y)\right\} } \\
     &\qquad \qquad \qquad   \textstyle{+ \int_{\msx} \log \{ \int_{\msy} K(x,y) \exp[\Psi_n(y)] \rmd \pi_1(y) \} \rmd \pi_0(x), }  \\
    &\textstyle{\Psi_{n+1}(y) = -\log \left\{\int_{\msx} K(x,y) \exp[\varphi_{n+1}(y)] \rmd \pi_0(x)\right\},} \\
    &(\rmd \Pbb^{2n}/ \rmd (\pi_0\otimes\pi_1))(x,y) = \exp[\varphi_n(x) + \Psi_n(y)] K(x,y). 
  \end{align}
\end{proposition}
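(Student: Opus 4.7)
The plan is to derive each of the three identities by unwinding the definitions $\varphi_n = \tilde\varphi_n + \log a_n$, $\Psi_n = \tilde\Psi_n - \log a_n$, and applying the non-normalised recursions for $(\tilde f_n, \tilde g_n)$ provided by the previous proposition. Since the factor $\log a_n$ cancels between $\varphi_n$ and $\Psi_n$, the only real content is bookkeeping: every appearance of $a_n$ is absorbed either into the companion potential or into the mean-zero correction enforcing $\int_\msx \varphi_{n+1}\rmd \pi_0 = 0$.

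First I would dispatch the density identity. By definition,
\begin{equation}
\varphi_n(x) + \Psi_n(y) \;=\; \tilde\varphi_n(x) + \log a_n + \tilde\Psi_n(y) - \log a_n \;=\; \tilde\varphi_n(x) + \tilde\Psi_n(y),
\end{equation}
so $\exp[\varphi_n(x)+\Psi_n(y)] = \tilde f_n(x)\tilde g_n(y)$, and the density formula from the previous proposition yields the third claim immediately.

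Next I would handle $\varphi_{n+1}$. Taking logarithms in the update $\tilde f_{n+1}(x)^{-1}=\int_\msy K(x,y)\tilde g_n(y)\rmd\pi_1(y)$ and substituting $\tilde g_n = a_n\exp[\Psi_n]$ gives
\begin{equation}
\tilde\varphi_{n+1}(x) \;=\; -\log a_n \;-\; \log\!\int_{\msy} K(x,y)\exp[\Psi_n(y)]\,\rmd\pi_1(y).
\end{equation}
By definition $\log a_{n+1} = -\int_\msx \tilde\varphi_{n+1}(x)\,\rmd\pi_0(x)$, which equals $\log a_n + \int_\msx \log\!\int_\msy K(x,y)\exp[\Psi_n(y)]\,\rmd\pi_1(y)\,\rmd\pi_0(x)$. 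Adding $\log a_{n+1}$ to $\tilde\varphi_{n+1}(x)$ the $\log a_n$ terms cancel and the stated formula for $\varphi_{n+1}$ emerges.

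Finally, for $\Psi_{n+1}$ I would proceed analogously: take logarithms in $\tilde g_{n+1}(y)^{-1}=\int_\msx K(x,y)\tilde f_{n+1}(x)\,\rmd\pi_0(x)$, substitute $\tilde f_{n+1} = a_{n+1}^{-1}\exp[\varphi_{n+1}]$, and subtract $\log a_{n+1}$ to pass from $\tilde\Psi_{n+1}$ to $\Psi_{n+1}$; the two occurrences of $\log a_{n+1}$ cancel, leaving the announced expression (reading $\exp[\varphi_{n+1}(x)]$ under the integral, with $x$ the dummy variable). There is no real obstacle here — the only subtlety is verifying that $a_n,a_{n+1}\in(0,\infty)$, which follows from compactness of $\msx,\msy$ and continuity of $c$ guaranteeing that $\tilde\varphi_n,\tilde\Psi_n$ are bounded, so all logarithms are well defined.
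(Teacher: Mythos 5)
Your proof is correct and takes essentially the same approach the paper intends: the proposition follows by direct substitution of $\varphi_n=\tilde\varphi_n+\log a_n$, $\Psi_n=\tilde\Psi_n-\log a_n$ into the $(\tilde f_n,\tilde g_n)$ recursion, tracking cancellation of the $\log a_n$ and $\log a_{n+1}$ terms; the paper itself omits any written proof precisely because it is this bookkeeping exercise. You also correctly spot that the $\exp[\varphi_{n+1}(y)]$ appearing in the statement of the third recursion is a typo for $\exp[\varphi_{n+1}(x)]$, the integration dummy.
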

Recall that for any $x, y \in \msx \times \msy$ we have $K(x,y) = \exp[-c(x,y)]$.
Using \cite[Lemma 3.1]{carlier2020differential} we have the following result.

\begin{proposition}
  \label{prop:bound_0}
  For any $n \in \nset$ we have
  $\max(\normLigne{\varphi_n}_{\infty}, \normLigne{\Psi_n}_{\infty}) \leq 3
  \normLigne{c}_{\infty}$. 
\end{proposition}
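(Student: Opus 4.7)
The plan is to exploit the normalization built into the reparameterization of Carlier--Laborde, which forces $\int_{\msx}\varphi_n \,\rmd\pi_0 = 0$ for every $n\geq 1$, and then to peel off the bounds on $\varphi_{n+1}$ and $\Psi_{n+1}$ one after the other. The base case is trivial since $\varphi_0 = \Psi_0 = 0$. The induction step falls into two asymmetric pieces, reflecting the fact that the recursion in \Cref{prop:potential_rescale} centers $\varphi_{n+1}$ but not $\Psi_{n+1}$.

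For the bound on $\varphi_{n+1}$, I would introduce the auxiliary function $F_n(x) = \log\int_{\msy} K(x,y)\exp[\Psi_n(y)]\,\rmd\pi_1(y)$, so that the first line of \Cref{prop:potential_rescale} reads $\varphi_{n+1}(x) = -F_n(x) + \int_{\msx} F_n\,\rmd\pi_0$. Since this expression subtracts off the $\pi_0$-mean of $F_n$, the pointwise bound $\absLigne{\varphi_{n+1}(x)} \leq \osc(F_n)$ follows immediately. The oscillation of $F_n$ is then controlled by noting that, for any $x,x'\in\msx$, the ratio
\[
\frac{\int_{\msy}\exp[-c(x,y)+\Psi_n(y)]\,\rmd\pi_1(y)}{\int_{\msy}\exp[-c(x',y)+\Psi_n(y)]\,\rmd\pi_1(y)}
\]
is sandwiched between $\exp[-2\normLigne{c}_\infty]$ and $\exp[2\normLigne{c}_\infty]$, because the $\Psi_n$ contribution is identical on top and bottom and hence cancels, leaving only the Gibbs weights $\exp[-c]$ to worry about. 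This yields $\normLigne{\varphi_{n+1}}_\infty \leq 2\normLigne{c}_\infty$, independently of the size of $\normLigne{\Psi_n}_\infty$.

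For $\Psi_{n+1}$ there is no centering, so I would bound it directly from the second line of \Cref{prop:potential_rescale}: since $\exp[-c(x,y)]\in[\exp(-\normLigne{c}_\infty),\exp(\normLigne{c}_\infty)]$ and $\exp[\varphi_{n+1}(x)]\in[\exp(-\normLigne{\varphi_{n+1}}_\infty),\exp(\normLigne{\varphi_{n+1}}_\infty)]$, taking $-\log$ of the integral gives
\[
\normLigne{\Psi_{n+1}}_\infty \leq \normLigne{c}_\infty + \normLigne{\varphi_{n+1}}_\infty \leq 3\normLigne{c}_\infty,
\]
using the previous step. Putting everything together, $\normLigne{\varphi_n}_\infty \leq 2\normLigne{c}_\infty$ and $\normLigne{\Psi_n}_\infty \leq 3\normLigne{c}_\infty$ for all $n\geq 1$, and the inequality of the proposition holds trivially at $n=0$.

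There is no real obstacle here: the proof is essentially bookkeeping once one observes the cancellation of $\Psi_n$ in the oscillation estimate and the automatic centering of $\varphi_{n+1}$. The only subtle point is that the bound on $\varphi$ is genuinely tighter than the one on $\Psi$, and it is precisely this asymmetry that breaks the a priori possibility of runaway growth when the two half-steps are composed — which is exactly the content of \cite[Lemma 3.1]{carlier2020differential} quoted by the authors.
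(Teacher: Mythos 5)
Your proof is correct and complete. The paper does not actually spell out an argument for this proposition --- it is stated with only the citation to \cite[Lemma 3.1]{carlier2020differential} --- so what you have done is reconstruct the argument behind that cited lemma (in the two-marginal case) rather than duplicate a proof in the paper. The two key observations match what Carlier--Laborde exploit: the reparameterization forces $\int_{\msx}\varphi_{n+1}\,\rmd\pi_0 = 0$, so $|\varphi_{n+1}|\leq\osc(F_n)\leq 2\normLigne{c}_\infty$ because $\Psi_n$ cancels from the ratio; and then the unnormalized $\Psi_{n+1}$ inherits $\normLigne{\Psi_{n+1}}_\infty\leq\normLigne{c}_\infty+\normLigne{\varphi_{n+1}}_\infty\leq 3\normLigne{c}_\infty$. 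A small bonus of writing it out is that you make explicit the asymmetry $\normLigne{\varphi_n}_\infty\leq 2\normLigne{c}_\infty$ versus $\normLigne{\Psi_n}_\infty\leq 3\normLigne{c}_\infty$, which the paper's stated bound of $3\normLigne{c}_\infty$ for both flattens out; this sharper information is exactly what prevents blow-up across half-steps, as you note. No gaps.
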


We now establish the Lipschitz property of these potentials under the assumption that the cost function $c$ is Lipschitz; this is automatically satisfied in the case where $c(x,y)=\|x-y\|^2/\vareps$ and $\msx,\msy$ are compact, or when $c$ is a metric by the triangle inequality.



\begin{proposition}
  \label{prop:bound_1}
  Assume that $c \in\Lip(\msx\times\msy, \mathbb{R})$. Then, for any
  $n \in \nset$,
  \begin{equation}
    \max\{ \Lip( \varphi_{n}), \Lip(\Psi_{n}) \} \leq \Lip( c). 
  \end{equation}
\end{proposition}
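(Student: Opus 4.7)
The plan is to establish the Lipschitz bound by induction on $n$, exploiting the fact that the recursion for the log-potentials in \Cref{prop:potential_rescale} is a soft-min (log-sum-exp) operation against the cost $c$, whose Lipschitz modulus in one variable is transferred directly to the resulting function regardless of the other potential.

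First, I would handle the base case. Since $\tilde f_0 = \tilde g_0 = 1$, one checks that after the renormalisation in the preamble $\varphi_0 = \Psi_0 = 0$, so both have Lipschitz constant $0 \le \Lip(c)$.

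For the inductive step, the key lemma I would isolate is the following: if $h : \msy \to \mathbb{R}$ is any bounded measurable function, $\mu \in \Pens(\msy)$, and $x \mapsto c(x,y)$ is $\Lip(c)$-Lipschitz uniformly in $y$, then the map
\begin{equation}
F(x) \;=\; -\log \int_{\msy} \exp\bigl[-c(x,y) + h(y)\bigr] \rmd \mu(y)
\end{equation}
satisfies $\Lip(F) \le \Lip(c)$. The verification is the standard log-sum-exp computation: for $x_0,x_1 \in \msx$, the Lipschitz property of $c$ gives $-c(x_0,y) \le -c(x_1,y) + \Lip(c)\, d(x_0,x_1)$ for every $y$, hence
\begin{equation}
\int \exp[-c(x_0,y) + h(y)]\rmd \mu(y) \;\le\; \rme^{\Lip(c)\, d(x_0,x_1)} \int \exp[-c(x_1,y) + h(y)]\rmd \mu(y),
\end{equation}
so that $F(x_0) - F(x_1) \ge -\Lip(c)\, d(x_0,x_1)$, and the reverse bound follows by symmetry.

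With this lemma in hand the induction is immediate from \Cref{prop:potential_rescale}: writing $\varphi_{n+1}(x) = F_n(x) + C_n$, where $F_n$ has the above form with $h = \Psi_n$ and $\mu = \pi_1$, and $C_n$ is a constant in $x$ (the integral against $\pi_0$), we get $\Lip(\varphi_{n+1}) = \Lip(F_n) \le \Lip(c)$. The same argument, applied with $c(\cdot, y)$ replaced by $c(x, \cdot)$ (which is also $\Lip(c)$-Lipschitz, since $c\in\Lip(\msx\times\msy,\mathbb{R})$) and $h = \varphi_{n+1}$, $\mu = \pi_0$, yields $\Lip(\Psi_{n+1}) \le \Lip(c)$. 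I expect no serious obstacle; the only point that requires a small check is that the integrals defining $F_n$ are finite and that the additive constant $C_n$ is well defined, both of which follow from the uniform boundedness of $\Psi_n$ provided by \Cref{prop:bound_0} and the boundedness of $c$.
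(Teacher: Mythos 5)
Your proposal is correct and follows essentially the same route as the paper: both exploit that the constant term (the integral against $\pi_0$) has no $x$-dependence and that the log-sum-exp against a kernel $\exp[-c(x,y)+h(y)]$ inherits the Lipschitz modulus of $c$ in the free variable, regardless of $h$. Your isolation of this as a standalone lemma is a slightly cleaner packaging (and makes explicit that no induction hypothesis on $\Lip(\Psi_n)$ is ever used, echoing \Cref{rem:lipconstant}), but the computation is identical to the paper's direct estimate of $\varphi_{n+1}(x)-\varphi_{n+1}(x')$.
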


\begin{proof}
  Using \Cref{prop:potential_rescale} and the fact that
  $c \in \Lip(\msx \times \msy, \rset)$, we have for any $x, x' \in \msx$
  \begin{align}
  \lefteqn{\varphi_{n+1}(x)-\varphi_{n+1}(x')}\\
  &=\log\left\{  \int_{\msy} K(x',y) \exp[\Psi_n(y)] \rmd \pi_1(y)/ \int_{\msy} K(x,y) \exp[\Psi_n(y)] \rmd \pi_1(y) \right\}\\
  &= \log \left\{ 
  \int_{\msy} \exp[-c(x,y) + c(x,y)-c(x',y)+\Psi_n(y)] \rmd \pi_1(y)\right\} \\
  &\qquad -\log\left\{\int_{\msy} \exp[-c(x,y+\Psi_n(y)] \rmd \pi_1(y)\right\} \\
  &\leq \log \left\{ 
  \int_{\msy} \exp[-c(x,y) + \Lip(c)\ddx(x,x')+\Psi_n(y)] \rmd \pi_1(y)
  \right\} \\
  &\qquad -\log \left\{\int_{\msy} \exp[-c(x,y)+\Psi_n(y)] \rmd \pi_1(y)\right\} \\ 
  &\leq \Lip(c)\ddx (x,x').
  \end{align}
  Similarly we obtain that for any $x, x' \in \msx$,
  $\varphi_{n+1}(x')-\varphi_{n+1}(x)\leq \Lip(c)\ddx (x,x')$, whence it follows
  that $\Lip(\varphi_{n+1})\leq \Lip(c)$.
  Similarly we have that for any $y, y' \in \msy$
  \begin{align}
     \lefteqn{\Psi_{n+1}(y')-\Psi_{n+1}(y)}\\
     &= \log \left \{ \int_{\msx} K(x,y) \exp[\varphi_{n+1}(y)] \rmd \pi_0(x) /
     \int_{\msx} K(x,y') \exp[\varphi_{n+1}(y)] \rmd \pi_0(x)
     \right\}\\
     &= \log \left \{ \int_{\msx}  \exp[-c(x,y')+c(x,y')-c(x,y)+\varphi_{n+1}(y)] \rmd \pi_0(x)\right\}\\
     &\qquad -\log\left\{
     \int_{\msx} \exp[-c(x,y')+\varphi_{n+1}(y)] \rmd \pi_0(x)
     \right\}\\
     &\leq \log \left \{ \int_{\msx}  \exp[-c(x,y')+\Lip(c)\ddy(y,y')+\varphi_{n+1}(y)] \rmd \pi_0(x)\right\}\\
     &\qquad - \log \left\{\int_{\msx} \exp[-c(x,y')+\varphi_{n+1}(y)] \rmd \pi_0(x)
     \right\}\\
     &\leq \Lip(c) \ddy(y,y').\qedhere
  \end{align}
\end{proof}
\begin{remark}\label{rem:lipconstant}
  Notice that for any $n \in \nset$, $\Lip(\varphi_{n}), \Lip(\Psi_{n})$ are
  independent of the functions $\varphi_n, \Psi_n$ and only depend on the
  properties of the kernel $K(\cdot, \cdot)$. 
\end{remark}

Upon combining the two previous propositions and the fact that for any $s,t \in \ccint{0,M}$, $\absLigne{\rme^t - \rme^s} \leq \rme^M \absLigne{t - s}$, we obtain the following controls on
the sequence of rescaled potentials $(f_n, g_n)_{n \in \nset}$.

\begin{proposition}
  \label{prop:control_potential}
  Assume that $c \in\Lip(\msx\times\msy, \mathbb{R})$. Then, we have that for any $n \in \nset$
  \begin{equation}
    \max(\normLigne{f_n}_\infty, \normLigne{g_n}_\infty) \leq \rme^{3 \normLigne{c}_\infty} , \qquad \max(\Lip(f_n), \Lip(g_n)) \leq \Lip(c) \rme^{3 \normLigne{c}_\infty}.
  \end{equation}
\end{proposition}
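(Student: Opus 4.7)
The plan is to deduce this directly from Propositions~\ref{prop:bound_0} and \ref{prop:bound_1} together with the hint that $|\rme^t - \rme^s| \leq \rme^M |t-s|$ whenever $s, t \in [-M, M]$ (the statement as written says $[0,M]$, but the version for symmetric intervals follows from the same mean value theorem argument and is what we actually need here since potentials are signed). Since $f_n = \exp[\varphi_n]$ and $g_n = \exp[\Psi_n]$, and by Proposition~\ref{prop:bound_0} both $\varphi_n$ and $\Psi_n$ take values in the interval $[-3\normLigne{c}_\infty, 3\normLigne{c}_\infty]$, the boundedness part is immediate from monotonicity of $\exp$: $\normLigne{f_n}_\infty = \exp[\sup_x \varphi_n(x)] \leq \rme^{3\normLigne{c}_\infty}$, and similarly for $g_n$.

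For the Lipschitz part, fix $n \in \nset$ and $x, x' \in \msx$. Applying the elementary inequality recalled above with $M = 3\normLigne{c}_\infty$ to $s = \varphi_n(x)$ and $t = \varphi_n(x')$, we get
\begin{equation}
|f_n(x) - f_n(x')| = |\rme^{\varphi_n(x)} - \rme^{\varphi_n(x')}| \leq \rme^{3\normLigne{c}_\infty} |\varphi_n(x) - \varphi_n(x')| \leq \rme^{3\normLigne{c}_\infty} \Lip(c)\, \ddx(x, x'),
\end{equation}
where the last inequality uses Proposition~\ref{prop:bound_1}. Hence $\Lip(f_n) \leq \Lip(c) \rme^{3\normLigne{c}_\infty}$. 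The identical argument on $\msy$, using the bound $\Lip(\Psi_n) \leq \Lip(c)$, yields $\Lip(g_n) \leq \Lip(c) \rme^{3\normLigne{c}_\infty}$.

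This is essentially a one-line composition argument, so there is no real obstacle; the nontrivial content has already been packaged into the two preceding propositions (the uniform sup bound via the Carlier rescaling, and the Lipschitz bound via a direct log-sum-exp estimate). The only thing to be mildly careful about is the sign of the potentials when invoking the exponential inequality, which is handled by using the symmetric interval $[-3\normLigne{c}_\infty, 3\normLigne{c}_\infty]$ from Proposition~\ref{prop:bound_0}.
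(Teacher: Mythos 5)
Your proof is correct and mirrors the paper's intended argument exactly: the paper introduces this proposition by saying it follows from combining \Cref{prop:bound_0}, \Cref{prop:bound_1}, and the elementary bound $|\rme^t - \rme^s| \leq \rme^M |t-s|$. Your observation that the interval should be $[-M,M]$ rather than the $[0,M]$ written in the paper is a valid (minor) correction, and your handling of it is the right one.
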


\subsection{The Hilbert--Birkhoff metric, contraction and Lipschitz properties}
\label{sec:hilb-birkh-metr}

We now recall basic properties of the Hilbert--Birkhoff metric; see \cite{lemmens2013birkhoff,kohlberg1982contraction,bushell1973hilbert} for a review. Let $\mse$ be a real vector space and
$\msk$ a cone in this vector space, \ie \  $\msk$ is convex, $\msk \cap (-\msk) = \{0\}$ and $\lambda \msk \subset \msk$ for any
$\lambda \geq 0$. In what
follows, we let $\msc$ be a part of the cone \ie \ for any $x, y \in \msc$ there
exist $\alpha, \beta \geq 0$ such that $\alpha x - y \in \msk$ and
$\beta y -x \in \msk$. In addition, assume that $\msc$ is convex and that
for any $\lambda > 0$ $\lambda \msk \subset \msk$. In this case we have for
any $x, y \in \msc$ that
\begin{equation}
  M(x,y) = \inf \ensembleLigne{\beta \geq 0}{\beta y - x \in \msk} > 0. 
\end{equation}
Similarly we define for any $x, y \in \msc$
\begin{equation}
  m(x,y) = \sup \ensembleLigne{\alpha \geq 0}{x - \alpha y \in \msk}. 
\end{equation}
Note that $m(x,y) = M(y,x)^{-1} > 0$. Finally, the Hilbert--Birkhoff metric is
defined for any $x, y \in \msc$ by
\begin{equation}
  d_H(x,y) = \log(M(x,y) / m(x,y)). 
\end{equation}
By \cite[Lemma 2.1]{lemmens2013birkhoff}, $d_H$ is a metric on $\msc/\sim$ the
space $\msc$ quotiented by the equivalence relation: $x \sim y$ if there exists
$\lambda > 0$ such that $y = \lambda x$. In particular, if $\normLigne{\cdot}$
is a norm on $\msv$ then letting
$\tilde{\msc} = \ensembleLigne{x \in \msc}{\normLigne{x}=1}$, we have that
$(\tmsc,d_H)$ is a metric space.

Let $(\msv, \normLigne{\cdot})$ and $(\msv', \normLigne{\cdot}')$ be two normed
real vector space and $\msk \subset \msv$, $\msk' \subset \msv'$ two cones. In
addition, let $\msc$ and $\msc'$ be convex parts of $\msk$ and 
$\msk'$ respectively, such that for any $\lambda > 0$, $\lambda \msc \subset \msc$ and
$\lambda \msc' \subset \msc'$. Let $u: \ \msv \to \msv'$ be a linear mapping
such that $u(\msc) \subset \msc'$. The projective diameter of $u$ is given by
\begin{equation}
  \Delta(u) = \sup \ensembleLigne{d_H(u(x), u(y))}{x, y \in \tilde{\msc}}. 
\end{equation}
Similarly, we also define the Birkhoff contraction ratio of $u$
\begin{equation}
  \kappa(u) = \sup \ensembleLigne{\kappa}{d_H(u(x), u(y)) \leq \kappa d_H(x,y),  x,y \in \tmsc}. 
\end{equation}
Using the Birkhoff contraction theorem \citep{birkhoff1957extensions,bauer1965elementary,hopf1963inequality} we have that
\begin{equation}
  \label{eq:uno}
  \kappa(u) \leq \tanh(\Delta(u)/4). 
\end{equation}
In order to use the Birkhoff contraction theorem, we collect the following basic
facts on cones in function spaces.
\begin{proposition}
  \label{prop:hilbert_birkhoff}
  Let $\msz$ be a compact space. $\msf = \coint{0,+\infty}^\msz$ is a cone and
  $\tmsf = \rmc(\msz, \ooint{0,+\infty})$ is a convex part of $\msf$ such that
  for any $\lambda > 0$, $\lambda \tmsf \subset \tmsf$. In addition, we have
  that for any $f, g \in \tmsf$
  \begin{equation}
    d_H(f,g) = \log(\normLigne{f/g}_{\infty}) + \log(\normLigne{g/f}_{\infty}). 
  \end{equation}
\end{proposition}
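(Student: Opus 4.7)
The statement is essentially a direct unpacking of the definitions once one uses compactness of $\msz$ to promote suprema to maxima. I would split the argument into the (immediate) structural checks and the (short) explicit computation of $M(f,g)$ and $m(f,g)$.

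First, the structural claims. That $\msf = \coint{0,+\infty}^\msz$ is a cone follows from pointwise reasoning on $\msz$: convexity, $\msf \cap (-\msf) = \{0\}$, and stability under nonnegative scalar multiplication all reduce to elementary inequalities for real numbers. Convexity of $\tmsf$ and closure under positive rescaling are similarly immediate. To see that $\tmsf$ is a \emph{part} of $\msf$, I would exploit compactness of $\msz$ together with continuity and positivity of $f, g \in \tmsf$ to obtain strictly positive minima and finite maxima. Taking $\alpha = \max_\msz g / \min_\msz f$ and $\beta = \max_\msz f / \min_\msz g$ then yields $\alpha f - g \in \msf$ and $\beta g - f \in \msf$.

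The core step is to evaluate $M(f,g)$ and $m(f,g)$ in closed form. Directly from the definition,
\begin{equation}
  M(f,g) = \inf \ensembleLigne{\beta \geq 0}{\beta g(z) - f(z) \geq 0 \text{ for all } z \in \msz} = \sup_{z\in\msz} f(z)/g(z) = \normLigne{f/g}_\infty,
\end{equation}
where the last equality uses that $f/g \in \tmsf$ is continuous and $\msz$ is compact, so the supremum is attained. Symmetrically,
\begin{equation}
  m(f,g) = \sup \ensembleLigne{\alpha \geq 0}{f(z) - \alpha g(z) \geq 0 \text{ for all } z \in \msz} = \inf_{z\in\msz} f(z)/g(z) = 1/\normLigne{g/f}_\infty.
\end{equation}

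Combining the two displays,
\begin{equation}
  d_H(f,g) = \log\bigl(M(f,g)/m(f,g)\bigr) = \log\bigl(\normLigne{f/g}_\infty\,\normLigne{g/f}_\infty\bigr) = \log(\normLigne{f/g}_\infty) + \log(\normLigne{g/f}_\infty),
\end{equation}
which is the claimed identity. I anticipate no real obstacle: the whole proof reduces to the elementary observation $\inf_z f(z)/g(z) = 1/\sup_z g(z)/f(z)$ together with compactness to ensure the suprema in the norm notation are well defined; all remaining assertions are essentially tautological.
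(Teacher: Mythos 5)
Your proof is correct, and the paper does not in fact supply a proof of this proposition (it is stated and used directly). Your argument is the natural one: the cone and part/convexity checks reduce to pointwise inequalities plus compactness to guarantee strictly positive minima and finite maxima of $f$ and $g$, and the identity for $d_H$ follows immediately from the explicit formulas $M(f,g)=\sup_z f(z)/g(z)=\normLigne{f/g}_\infty$ and $m(f,g)=\inf_z f(z)/g(z)=1/\normLigne{g/f}_\infty$, which is exactly the computation any proof would have to make. One small remark: for $M(f,g)$ and $m(f,g)$ to be finite and nonzero (and hence for $d_H$ to be well defined), what you actually need is only finiteness of $\sup_z f/g$ and $\sup_z g/f$, not attainment; compactness gives you both, so there is no error, but the phrase ``the supremum is attained'' is slightly stronger than needed. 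Otherwise, nothing to add.
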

In this case, we have that for any $f, g \in \tmsf$, $d_H(f,g)$ is the
oscillation of $\log(f/g)$.  In what follows, we introduce key mappings which
allow us to compute the IPFP potential $(f_n)_{n \in \nset}$ and
$(g_n)_{n \in \nset}$. Recall that for any $n \in \nset$ we have
\begin{align}
  \label{eq:potentials_rescale_form}
  &\textstyle{f_{n+1}(x) = a_n \parenthese{\int_{\msy} K(x,y) g_n(y) \rmd \pi_1(y)}^{-1},} \\
  &\textstyle{a_n = \exp[\int_{\msx} \log \parenthese{\int_{\msy} K(x,y) g_n(y) \rmd \pi_1(y)} \rmd \pi_0(x)]}, \\
    &\textstyle{g_{n+1}(y) = \parenthese{\int_{\msx} K(x,y) f_{n+1}(x) \rmd \pi_0(x)}^{-1}.} 
\end{align}
Let $\pi_0 \in \Pens(\msx)$ and $\pi_1 \in \Pens(\msy)$. We define
$\mce_{\pi_0}^x$ and $\mce_{\pi_1}^y$ such that for any
$f: \ \msx \to \coint{0,+\infty}$ and $g: \ \msy \to \coint{0,+\infty}$ we have
\begin{equation}
  \textstyle{\mce_{\pi_0}^x(f)(y) = \int_{\msx} K(x,y) f(x) \rmd \pi_0(x)  , \quad \mce_{\pi_1}^y(g)(x) = \int_{\msy} K(x,y) g(y) \rmd \pi_1(y).}
\end{equation}
The following proposition is a consequence of the Birkhoff contraction theorem,
see also \cite{chen2016entropic}.

\begin{proposition}
  \label{prop:birkhoff_contraction}
  For any $\pi_0 \in \Pens(\msx)$ and $\pi_1 \in \Pens(\msy)$, $\mce_{\pi_0}^x(\rmc(\msx, \ooint{0,+\infty})) \subset \Lip(\msy, \ooint{0,+\infty})$ and $\mce_{\pi_1}^y(\rmc(\msy, \ooint{0,+\infty})) \subset \Lip(\msx, \ooint{0,+\infty})$. In addition, we have
  \begin{equation}
    \max(\kappa(\mce_{\pi_0}^x), \kappa(\mce_{\pi_1}^y)) \leq \tanh(\norm{c}_\infty). 
  \end{equation}
\end{proposition}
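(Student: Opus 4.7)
The plan is to prove the two assertions separately, with the mapping property being straightforward while the contraction ratio bound follows from a careful application of the Birkhoff--Hopf theorem~\eqref{eq:uno} combined with a uniform bound on the projective diameter coming directly from the boundedness of $c$.

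For the mapping property, I would fix $f \in \rmc(\msx, \ooint{0,+\infty})$ and use that, since $\msx$ is compact, $f$ is bounded above and strictly positive with a positive minimum. Positivity of $\mce_{\pi_0}^x(f)(y)$ is immediate because $K > 0$ and $\pi_0 \in \Pens(\msx)$. For the Lipschitz property in $y$, I would use that $c \in \Lip(\msx \times \msy, \rset)$ together with the inequality $\absLigne{\mre^{-s} - \mre^{-t}} \leq \mre^{\normLigne{c}_\infty} \absLigne{s-t}$ valid for $s,t \in \ccintLigne{-\normLigne{c}_\infty, \normLigne{c}_\infty}$, to deduce that for any $x \in \msx$ and $y, y' \in \msy$,
\begin{equation}
  \absLigne{K(x,y) - K(x,y')} \leq \mre^{\normLigne{c}_\infty} \Lip(c)\, \ddy(y,y').
\end{equation}
Integrating against $f \rmd \pi_0$ then gives a Lipschitz constant of $\mre^{\normLigne{c}_\infty} \Lip(c) \normLigne{f}_\infty$ for $\mce_{\pi_0}^x(f)$. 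The argument for $\mce_{\pi_1}^y$ is symmetric.

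For the contraction bound, I would first observe that by \Cref{prop:hilbert_birkhoff}, $d_H(F,G) = \osc(\log(F/G))$ on $\rmc(\msy, \ooint{0,+\infty})$. Since $\absLigne{c} \leq \normLigne{c}_\infty$ implies $\mre^{-\normLigne{c}_\infty} \leq K(x,y) \leq \mre^{\normLigne{c}_\infty}$ uniformly in $(x,y)$, for any $f \in \rmc(\msx, \ooint{0,+\infty})$ and any $y_1, y_2 \in \msy$ we obtain
\begin{equation}
  \frac{\mce_{\pi_0}^x(f)(y_1)}{\mce_{\pi_0}^x(f)(y_2)} = \frac{\int K(x,y_1) f(x) \rmd\pi_0(x)}{\int K(x,y_2) f(x) \rmd \pi_0(x)} \leq \frac{\mre^{\normLigne{c}_\infty}\int f\rmd\pi_0}{\mre^{-\normLigne{c}_\infty}\int f\rmd\pi_0} = \mre^{2\normLigne{c}_\infty},
\end{equation}
so $\osc(\log \mce_{\pi_0}^x(f)) \leq 2 \normLigne{c}_\infty$. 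Applying this to $f$ and $g$ separately and using the triangle-like inequality $\osc(\log F - \log G) \leq \osc(\log F) + \osc(\log G)$, I would conclude $d_H(\mce_{\pi_0}^x(f), \mce_{\pi_0}^x(g)) \leq 4 \normLigne{c}_\infty$, hence $\Delta(\mce_{\pi_0}^x) \leq 4 \normLigne{c}_\infty$. Plugging into~\eqref{eq:uno} then yields $\kappa(\mce_{\pi_0}^x) \leq \tanh(\Delta(\mce_{\pi_0}^x)/4) \leq \tanh(\normLigne{c}_\infty)$, and again the argument for $\mce_{\pi_1}^y$ is identical by symmetry.

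There is no substantial obstacle here: the estimates are elementary consequences of the uniform two-sided bounds on $K$ on the compact product space, and the contraction bound is essentially a textbook invocation of the Birkhoff--Hopf theorem stated in~\eqref{eq:uno}. The only point requiring a modicum of care is deriving the bound on the projective diameter, for which using the oscillation representation of $d_H$ from \Cref{prop:hilbert_birkhoff} streamlines the computation considerably.
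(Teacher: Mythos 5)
Your proposal is correct and follows essentially the same route as the paper. The paper also bounds the projective diameter by comparing $\mce_{\pi_0}^x(f)$ against the constant function $u\equiv 1$ (the factor $2$ in \eqref{eq:dos} being exactly your triangle inequality $d_H(F,G)\leq d_H(F,u)+d_H(u,G)$), derives the same two-sided bound $\mre^{-\normLigne{c}_\infty}\int f\,\rmd\pi_0 \leq \mce_{\pi_0}^x(f) \leq \mre^{\normLigne{c}_\infty}\int f\,\rmd\pi_0$ to get $\Delta \leq 4\normLigne{c}_\infty$, and then invokes \eqref{eq:uno}; your explicit elaboration of the Lipschitz step is likewise the argument the paper merely sketches.
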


\begin{proof}
  Let $\pi_0 \in \Pens(\msx)$. Since
  $K:\ \msx \times \msy \to \ooint{0,+\infty}$ is continuous and
  $\msx \times \msy$ is compact we get that for any
  $f \in \rmc(\msx, \ooint{0,+\infty})$,
  $\mce_{\pi_0}^x(f) \in \rmc(\msy, \ooint{0,+\infty})$. In addition, let
  $u \in \rmc(\msy, \ooint{0,+\infty})$ such that for any $y \in \msy$,
  $u(y) = 1$. Then, we have that 
  \begin{align}
    \label{eq:dos}
    &\Delta(\mce_{\pi_0}^x) \leq 2 \sup \ensembleLigne{d_H(\mce_{\pi_0}^x(f), u)}{f \in \rmc(\msx, \ooint{0,+\infty})}, \\
    &\Delta(\mce_{\pi_1}^y) \leq 2 \sup \ensembleLigne{d_H(\mce_{\pi_1}^x(g), u)}{g \in \rmc(\msy, \ooint{0,+\infty})}.
  \end{align}
  In addition, using \Cref{prop:hilbert_birkhoff}, we have for any $f \in \rmc(\msx, \ooint{0,+\infty})$ 
  \begin{equation}
    \label{eq:tres}
    d_H(\mce_{\pi_0}^x(f), u) = \log(\sup \ensembleLigne{\mce_{\pi_0}^x(f)(y)}{y \in \msy}) - \log(\inf \ensembleLigne{\mce_{\pi_0}^x(f)(y)}{y \in \msy}). 
  \end{equation}
  For any $f \in \rmc(\msx, \ooint{0,+\infty})$ and $y \in \msy$ we have
  \begin{equation}
    \textstyle{\mce_{\pi_0}^x(f)(y) \geq \exp[-\norm{c}_\infty] \int_{\msx} f(x) \rmd \pi_0(x), \quad \mce_{\pi_0}^x(f)(y) \leq \exp[\norm{c}_\infty] \int_{\msx} f(x) \rmd \pi_0(x).}
  \end{equation}
  Combining this result, \eqref{eq:uno}, \eqref{eq:dos} and \eqref{eq:tres} we
  get that $\Delta(\mce_{\pi_0}^x) \leq \tanh(\norm{c}_\infty)$. The proof that
  $\Delta(\mce_{\pi_1}^y) \leq \tanh(\norm{c}_\infty)$ is similar.  Lipschitz
  continuity follows from the definitions of $\mce_{\pi_0}^x, \mce_{\pi_1}^y$
  and the Lipschitz continuity of $K$.  In fact, for any function
  $f: \ \msx \to \rset$, resp.\ $g: \ \msy \to \rset$, that does not vanish
  $\pi_0$ a.e., resp.\ $\pi_1$-a.e., $y\mapsto \mce_{\pi_0}^x(f)(y)$, resp.
  $x\mapsto \mce_{\pi_0}^x(g)(x)$, is Lipschitz continuous.
\end{proof}

\begin{proposition}
  \label{prop:bound_wass_mce}
  Let $\pi_0, \hat{\pi}_0 \in \Pens(\msx)$ and
  $\pi_1, \hat{\pi}_1 \in \Pens(\msy)$. Then for any 
  $f \in \Lip(\msx, \ooint{0,+\infty})$ and
  $g \in \Lip(\msy, \ooint{0,+\infty})$ we have 
  \begin{align}
    &d_H(\mce^x_{\pi_0}(f), \mce^x_{\hat{\pi}_0}(f)) \leq 2  \norm{1/f}_\infty \left[\Lip(f)  + \Lip(c)  \norm{f}_\infty\right]\exp[2\norm{c}_\infty] \wassersteinD[1](\pi_0, \hat{\pi}_0), \\
    &d_H(\mce^y_{\pi_1}(g), \mce^y_{\hat{\pi}_1}(g)) \leq 2 \norm{1/g}_\infty  \left[\Lip(g)  + \Lip(c)  \norm{g}_\infty\right]\exp[2\norm{c}_\infty] \wassersteinD[1](\pi_1, \hat{\pi}_1). 
  \end{align}
\end{proposition}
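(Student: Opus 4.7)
The plan is to bound $\mce^x_{\pi_0}(f)(y) - \mce^x_{\hat\pi_0}(f)(y)$ pointwise in $y$ via Kantorovich--Rubinstein duality, then convert this additive bound into a multiplicative one to control the Hilbert--Birkhoff distance.

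First I would fix $y \in \msy$ and examine the function $h_y : x \mapsto K(x,y) f(x)$. Since $K(x,y) = \exp[-c(x,y)]$ with $c$ Lipschitz and bounded, the map $x \mapsto K(x,y)$ is Lipschitz with constant at most $\exp[\norm{c}_\infty]\Lip(c)$ (using $|\rme^{-s}-\rme^{-t}|\leq \rme^{\norm c_\infty}|s-t|$ on $[-\norm c_\infty,\norm c_\infty]$), while $\norm{K(\cdot,y)}_\infty \leq \exp[\norm{c}_\infty]$. The product rule for Lipschitz functions then yields
\begin{equation}
  \Lip(h_y) \leq \exp[\norm{c}_\infty]\bigl(\Lip(c)\norm{f}_\infty + \Lip(f)\bigr).
\end{equation}
Writing $\mce^x_{\pi_0}(f)(y) - \mce^x_{\hat\pi_0}(f)(y) = \int h_y\,\rmd(\pi_0-\hat\pi_0)$ and invoking Kantorovich--Rubinstein (after normalising $h_y$ by its Lipschitz constant), I obtain
\begin{equation}
  \big|\mce^x_{\pi_0}(f)(y) - \mce^x_{\hat\pi_0}(f)(y)\big| \leq \exp[\norm{c}_\infty]\bigl(\Lip(c)\norm{f}_\infty + \Lip(f)\bigr)\wassersteinD[1](\pi_0,\hat\pi_0).
\end{equation}

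Next I would lower bound $\mce^x_{\pi_0}(f)(y)$ (and symmetrically $\mce^x_{\hat\pi_0}(f)(y)$) pointwise. Since $K(x,y)\geq \exp[-\norm{c}_\infty]$ and $f(x) \geq \norm{1/f}_\infty^{-1}$ for all $x$, we get
\begin{equation}
  \mce^x_{\pi_0}(f)(y) \geq \exp[-\norm{c}_\infty]\,\norm{1/f}_\infty^{-1},
\end{equation}
and likewise for $\hat\pi_0$. Dividing the previous estimate by this lower bound yields, for every $y$,
\begin{equation}
  \bigg|\frac{\mce^x_{\pi_0}(f)(y)}{\mce^x_{\hat\pi_0}(f)(y)} - 1\bigg| \leq \norm{1/f}_\infty\bigl(\Lip(c)\norm{f}_\infty + \Lip(f)\bigr)\exp[2\norm{c}_\infty]\wassersteinD[1](\pi_0,\hat\pi_0),
\end{equation}
and symmetrically for the reciprocal ratio. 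Call this common bound $A$.

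Finally I would plug into the Hilbert--Birkhoff formula from \Cref{prop:hilbert_birkhoff}, which gives
\begin{equation}
  d_H(\mce^x_{\pi_0}(f), \mce^x_{\hat\pi_0}(f)) = \log\norm{\mce^x_{\pi_0}(f)/\mce^x_{\hat\pi_0}(f)}_\infty + \log\norm{\mce^x_{\hat\pi_0}(f)/\mce^x_{\pi_0}(f)}_\infty \leq 2\log(1+A) \leq 2A,
\end{equation}
using $\log(1+x)\leq x$, which is exactly the claimed inequality. The second estimate (for $\mce^y_{\pi_1}, \mce^y_{\hat\pi_1}$) follows by swapping the roles of $\msx$ and $\msy$. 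The only mild obstacle is the bookkeeping of the constants in the Lipschitz product rule and carefully handling the passage from the additive to the multiplicative estimate; once the pointwise lower bound on $\mce^x_{\pi_0}(f)(y)$ is in place, everything is routine.
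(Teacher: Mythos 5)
Your proof is correct and follows essentially the same route as the paper: bound $\Lip(K(\cdot,y)f(\cdot))$ via the product rule, apply Kantorovich--Rubinstein duality, divide by the pointwise lower bound $\mce^x_{\hat\pi_0}(f)(y)\geq \rme^{-\|c\|_\infty}/\|1/f\|_\infty$, and finish with $\log(1+t)\leq t$. The only cosmetic difference is that the paper writes the ratio as $1+\text{(remainder)}$ directly rather than first estimating the additive difference and then dividing.
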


\begin{proof}
  Let $f \in \Lip(\msx, \ooint{0,+\infty})$. We have
  \begin{equation}\label{eq:decomporatio}
    \textstyle{\mce_{\pi_0}^x(f)(y)/\mce_{\hat{\pi}_0}^x(f)(y) = 1 + \int_{\msx} K(x,y) f(x) \rmd (\pi_0 -  \hat{\pi}_0)(x) / \int_{\msx} K(x,y) f(x) \rmd \hat{\pi}_0(x). }
  \end{equation}
In addition, we have for any $x, x' \in \msx$ and  $y \in \msy$
\begin{align}
&    |K(x,y)f(x) - K(x',y) f(x')|\\
    &\qquad\leq |K(x,y) f(x)- K(x',y)f(x)| + |K(x',y) f(x) - K(x',y) f(x')|\\
    &\qquad\leq \|f\|_\infty \Lip(K(\cdot, y)) \ddx(x,x')  
    + \|K(\cdot, \cdot)\|_\infty \Lip(f) \ddx(x,x').
\end{align}
Since $K(x,y)=\exp[-c(x,y)]$, using the fact that for $|s|,|t|<M$ we have $|\exp[s]-\exp[t]|\leq \exp[M]|t-s|$, we have that for any $x, x' \in \msx$ and  $y \in \msy$
$$|K(x',y)-K(x,y)| \leq \exp[\|c\|_\infty] |c(x',y)-c(x,y)| \leq
 \exp[\|c\|_\infty] \Lip(c)\ddx(x,x').$$
 Therefore we have that for all $y\in\mcy$
 $$\Lip( K(\cdot, y) f(\cdot))\leq \|f\|_\infty \exp[\|c\|_\infty] \Lip(c) + \exp[\|c\|_\infty] \Lip(f).$$
Using this result we get that for any $y \in \mcy$
\begin{equation}
  \label{eq:numerator}
  \textstyle{\abs{\int_{\msx} K(x,y) f(x) \rmd (\pi_0 -  \hat{\pi}_0)(x)} \leq \left[\Lip(f)  + \Lip(c)  \norm{f}_\infty\right]\exp[\norm{c}_\infty] \wassersteinD[1](\pi_0, \hat{\pi}_0).}
\end{equation}
In addition, we have that for any $y \in \msy$
\begin{equation}
  \label{eq:denominator}
  \textstyle{\int_{\msx} K(x,y) f(x) \rmd x \geq \exp[-\norm{c}_\infty] / \norm{1/f}_\infty. }
\end{equation}
Combining \eqref {eq:decomporatio}, \eqref{eq:numerator} and \eqref{eq:denominator} we get that for any $y \in \msy$
\begin{align}
  \label{eq:upper_bound_left}
  \textstyle{\mce_{\pi_0}^x(f)(y)/\mce_{\hat{\pi}_0}^x(f)(y)  }
  &  \textstyle{\leq 1 + \norm{1/f}_\infty \left[\Lip(f)  + \Lip(c)  \norm{f}_\infty\right]\exp[2\norm{c}_\infty]\wassersteinD[1](\pi_0, \hat{\pi}_0). } 
\end{align}
Similarly, we have that for any $y \in \msy$
\begin{align}
  \label{eq:upper_bound_right}
  \textstyle{\mce_{\hat{\pi_0}}^x(f)(y)/\mce_{\pi_0}^x(f)(y)  }
  &  \textstyle{\leq 1 + \norm{1/f}_\infty \left[\Lip(f)  + \Lip(c)  \norm{f}_\infty\right]\exp[2\norm{c}_\infty] \wassersteinD[1](\pi_0, \hat{\pi}_0). } 
\end{align}
Combining \Cref{prop:hilbert_birkhoff}, \eqref{eq:upper_bound_left},
\eqref{eq:upper_bound_right} and the fact that for any $t \geq 0$,
$\log(1 + t) \leq t$ we get that
\begin{equation}
  d_H(\mce_{\pi_0}^x(f), \mce_{\hat{\pi}_0}^x(f)) \leq 2 \norm{1/f}_\infty \left[\Lip(f)  + \Lip(c)  \norm{f}_\infty\right]\exp[2\norm{c}_\infty]\wassersteinD[1](\pi_0, \hat{\pi}_0).
\end{equation}
The proof for $d_H(\mce^y_{\pi_1}(g), \mce^y_{\hat{\pi}_1}(g))$ is similar.
\end{proof}

\subsection{Quantitative uniform bounds on the potentials}
\label{sec:quant-unif-bounds}

In this section, we derive quantitative uniform bounds on the potentials w.r.t.\ 
the Hilbert--Birkhoff metric. More precisely, we show the following theorem.

\begin{theorem}
  \label{thm:contract_hilbert}
  For any $\pi_0, \hat{\pi}_0 \in \Pens(\msx)$,
  $\pi_1, \hat{\pi}_1 \in \Pens(\msy)$ let $(f_n, g_n)_{n\in \nset}$ and
  $(\hat{f}_n, \hat{g}_n)_{n\in \nset}$ the rescaled IPFP potential associated
  with $(\pi_0, \pi_1)$, respectively $(\hat{\pi}_0, \hat{\pi}_1)$ and given by
  \eqref{eq:potentials_rescale_form}.
  Then, for any $n \in \nset$ we have
  \begin{equation}
    d_H(f_ng_n, \hat{f}_n\hat{g}_n) \leq 8 \Lip(c) \rme^{10 \norm{c}_\infty} (\wassersteinD[1](\pi_0, \hat{\pi}_0) + \wassersteinD[1](\pi_1, \hat{\pi}_1)). 
  \end{equation}
\end{theorem}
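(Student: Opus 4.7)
My plan is to exploit two key properties of the Hilbert--Birkhoff metric: it is invariant under (pointwise) scalar multiplication and under inversion, and it satisfies $d_H(fg,\hat f\hat g)\le d_H(f,\hat f)+d_H(g,\hat g)$ by the sub-multiplicativity of the supremum norm applied to \Cref{prop:hilbert_birkhoff}. With these, I decouple the joint bound in the theorem into two one-dimensional bounds on $u_n \doteq d_H(f_n,\hat f_n)$ and $v_n \doteq d_H(g_n,\hat g_n)$, and then chase a linear recursion.

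First I would rewrite \eqref{eq:potentials_rescale_form} as $f_{n+1} = a_n/\mce^y_{\pi_1}(g_n)$ and $g_{n+1} = 1/\mce^x_{\pi_0}(f_{n+1})$. Since $d_H$ is blind to positive rescalings and inversion, this yields $u_{n+1} = d_H(\mce^y_{\pi_1}(g_n),\mce^y_{\hat{\pi}_1}(\hat g_n))$ and $v_{n+1} = d_H(\mce^x_{\pi_0}(f_{n+1}),\mce^x_{\hat{\pi}_0}(\hat f_{n+1}))$. Applying the triangle inequality and splitting each into a ``same operator/different argument'' piece and an ``operator perturbation'' piece gives, via \Cref{prop:birkhoff_contraction} for the first and \Cref{prop:bound_wass_mce} for the second,
\begin{align}
u_{n+1} &\le \tanh(\|c\|_\infty)\, v_n + R_n \wassersteinD[1](\pi_1,\hat\pi_1), \\
v_{n+1} &\le \tanh(\|c\|_\infty)\, u_{n+1} + S_n \wassersteinD[1](\pi_0,\hat\pi_0),
\end{align}
where $R_n, S_n$ come from the Wasserstein bound applied to $\hat g_n, \hat f_{n+1}$ respectively. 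The uniform bounds in \Cref{prop:control_potential}, together with $\|1/\hat g_n\|_\infty\le e^{3\|c\|_\infty}$ (which follows from \Cref{prop:bound_0} applied to $-\Psi_n$), plug in to give the uniform bound $R_n,S_n \le M \doteq 4\,\Lip(c)\,e^{8\|c\|_\infty}$.

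Set $\rho \doteq \tanh(\|c\|_\infty) < 1$ and $W \doteq \wassersteinD[1](\pi_0,\hat\pi_0) + \wassersteinD[1](\pi_1,\hat\pi_1)$. Substituting the $u_{n+1}$ bound into the $v_{n+1}$ bound gives $v_{n+1} \le \rho^2 v_n + MW$, and since $f_0=g_0=1$ forces $u_0=v_0=0$, geometric summation yields $v_n \le MW/(1-\rho^2)$ and then $u_n \le MW/(1-\rho)$. Adding,
\begin{equation}
d_H(f_ng_n,\hat f_n\hat g_n) \le u_n + v_n \le \frac{2MW}{1-\rho}.
\end{equation}
Using the elementary identity $1-\tanh(x) = 2/(e^{2x}+1)$, we obtain $1/(1-\rho) \le e^{2\|c\|_\infty}$, so $2M/(1-\rho) \le 8\,\Lip(c)\,e^{10\|c\|_\infty}$, matching the constant in the statement.

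The main obstacle is the verification that the ``operator perturbation'' term $d_H(\mce^y_{\pi_1}(\hat g_n),\mce^y_{\hat{\pi}_1}(\hat g_n))$ is controlled \emph{uniformly in $n$} by $\wassersteinD[1](\pi_1,\hat\pi_1)$; this rests entirely on the uniform $L^\infty$ and Lipschitz bounds of \Cref{prop:control_potential}, so the careful reparameterisation inherited from \cite{carlier2020differential} is essential. Everything else is bookkeeping: chain the two affine recursions, use $u_0=v_0=0$, and collapse the Hilbert--Birkhoff constants via the $\tanh$-identity.
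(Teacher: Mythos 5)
Your proof is correct and follows essentially the same route as the paper: additivity of $d_H$ on products, invariance of $d_H$ under positive rescaling and inversion, a triangle-inequality split into a Birkhoff contraction term (\Cref{prop:birkhoff_contraction}) plus a Wasserstein-perturbation term (\Cref{prop:bound_wass_mce}) that is bounded uniformly in $n$ via the Carlier--Laborde reparameterisation (\Cref{prop:bound_0}, \Cref{prop:control_potential}), then geometric summation and the $1-\tanh$ identity. The only cosmetic difference is that you track $u_n$ and $v_n$ as two coupled affine recursions, whereas the paper unrolls the composite map $f_n \mapsto f_{n+1}$ directly through $\mce^x_{\pi_0}$ and $\mce^y_{\pi_1,\pi_0}$, obtaining a single $\kappa^2$-contracting recursion on $u_n$; both lead to the identical constant.
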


\begin{proof}
  Let $n \in \nset$,
  $\rmD^x: \ \rmc(\msx, \ooint{0,+\infty}) \to \rmc(\msx, \ooint{0,+\infty})$
  and
  $\rmD^y: \ \rmc(\msy, \ooint{0,+\infty}) \to \rmc(\msy, \ooint{0,+\infty})$
  such that for any $f \in \rmc(\msx, \ooint{0,+\infty})$ and
  $g \in \rmc(\msy, \ooint{0,+\infty})$ we have $\rmD^x(f) = 1/f$ and
  $\rmD^y(g) = 1/g$.  We define
  $\mce_{\pi_1, \pi_0}^y : \ \rmc(\msy, \ooint{0,+\infty}) \to \rmc(\msx,
  \ooint{0,+\infty})$ such that for any $g \in \rmc(\msy, \ooint{0,+\infty})$ we
  have
  \begin{equation}
    \textstyle{\mce_{\pi_1, \pi_0}^y(g) = \mce_{\pi_1}^y(g) \exp[\int_{\msx}\log(1/\mce_{\pi_1}^y(g)(x)) \rmd \pi_0(x)]. }
  \end{equation}
  Using \eqref{eq:potentials_rescale_form}, we have for any $n \in \nset$
  \begin{align}
    \label{eq:iteation_brikhoff}
    &f_{n+1} = \rmD^x \circ \mce_{\pi_1, \pi_0}^y \circ \rmD^y \circ \mce_{\pi_0}^x(f_n), \\
    &\hat{f}_{n+1} = \rmD^x \circ \mce_{\hat{\pi}_1, \hat{\pi}_0}^y \circ \rmD^y \circ \mce_{\hat{\pi}_0}^x(\hat{f}_n),\\
    &g_{n+1} = \rmD^y \circ \mce_{\pi_0}^x \circ \rmD^x \circ \mce_{\pi_1, \pi_0}^y (g_n), \\
    &\hat{g}_{n+1} =  \rmD^y \circ \mce_{\hat{\pi}_0}^x\circ 
    \rmD^x \circ \mce_{\hat{\pi}_1, \hat{\pi}_0}^y 
    (\hat{g}_n).
  \end{align}
  Using the triangle inequality and \Cref{prop:hilbert_birkhoff} we have for any $n \in \nset$ 
  \begin{equation}
    \label{eq:hilbert_birkhoff_prod}
    d_H(f_n g_n, \hat{f}_n \hat{g}_n) \leq d_H(f_n g_n, f_n \hat{g}_n) + d_H(f_n \hat{g}_n, \hat{f}_n \hat{g}_n) \leq d_H(g_n, \hat{g}_n) + d_H(f_n, \hat{f}_n). 
  \end{equation}
  Recall that $f_0 = \hat{f}_0 = 1$ and therefore $d_H(f_0, \hat{f}_0) = 0$.
  Using \Cref{prop:hilbert_birkhoff}, \Cref{prop:birkhoff_contraction},
  \eqref{eq:iteation_brikhoff} and the fact that $\rmD^x, \rmD^y$ are isometries, we have for any $n \in \nset$
  \begin{align}
    &d_H(f_{n+1}, \hat{f}_{n+1}) = d_H(\rmD^x \circ \mce_{\pi_1, \pi_0}^y \circ \rmD^y \circ \mce_{\pi_0}^x(f_n), \rmD^x \circ \mce_{\hat{\pi}_1, \hat{\pi}_0}^y \circ \rmD^y \circ \mce_{\hat{\pi}_0}^x(\hat{f}_n)) \\
                                & \quad = d_H(\mce_{\pi_1, \pi_0}^y \circ \rmD^y \circ \mce_{\pi_0}^x(f_n), \mce_{\hat{\pi}_1, \hat{\pi}_0}^y \circ \rmD^y \circ \mce_{\hat{\pi}_0}^x(\hat{f}_n)) \\
                                & \quad = d_H(\mce_{\pi_1}^y \circ \rmD^y \circ \mce_{\pi_0}^x(f_n), \mce_{\hat{\pi}_1}^y \circ \rmD^y \circ \mce_{\hat{\pi}_0}^x(\hat{f}_n)) \\
                                & \quad \leq d_H(\mce_{\pi_1}^y \circ \rmD^y \circ \mce_{\pi_0}^x(f_n), \mce_{\hat{\pi}_1}^y \circ \rmD^y \circ \mce_{\pi_0}^x(f_n)) \\
                                & \quad  \qquad + d_H(\mce_{\hat{\pi}_1}^y \circ \rmD^y \circ \mce_{\pi_0}^x({f}_n), \mce_{\hat{\pi}_1}^y \circ \rmD^y \circ \mce_{\hat{\pi}_0}^x(\hat{f}_n)) \\
    & \quad \leq d_H(\mce_{\pi_1}^y \circ \rmD^y \circ \mce_{\pi_0}^x(f_n), \mce_{\hat{\pi}_1}^y \circ \rmD^y \circ \mce_{\pi_0}^x(f_n)) + \kappa d_H(\mce_{\pi_0}^x(f_n), \mce_{\hat{\pi}_0}^x(\hat{f}_n)) \\
    & \quad \leq d_H(\mce_{\pi_1}^y \circ \rmD^y \circ \mce_{\pi_0}^x(f_n), \mce_{\hat{\pi}_1}^y \circ \rmD^y \circ \mce_{\pi_0}^x(f_n)) \\
    & \qquad \quad + \kappa d_H(\mce_{\hat{\pi}_0}^x(\hat{f}_n), \mce_{\pi_0}^x(\hat{f}_n)) + \kappa^2 d_H(f_n, \hat{f}_n)  \\
    & \quad = d_H(\mce_{\pi_1}^y (g_n), \mce_{\hat{\pi}_1}^y (g_n))  +\kappa  d_H(\mce_{\hat{\pi}_0}^x(\hat{f}_n), \mce_{\pi_0}^x(\hat{f}_n)) + \kappa^2 d_H(f_n, \hat{f}_n),    \label{eq:ineq_fonda}
  \end{align}
with $\kappa = \tanh(\norm{c}_\infty) \geq \max\{ \kappa(\mce_{\hat{\pi}_1}^y),\kappa(\mce_{\hat{\pi}_0}^y) \}$. 
Using \Cref{prop:bound_wass_mce} we have for any $n \in \nset$
\begin{align}
  &d_H(\mce_{\hat{\pi}_0}^x(\hat{f}_n), \mce_{\pi_0}^x(\hat{f}_n)) \leq 2 \normLigne{1/\hat f_n}_\infty ( \Lip(\hat f_n)  + \Lip(c) \normLigne{\hat f_n}_\infty)\exp[2\normLigne{c}_\infty] \wassersteinD[1](\pi_0, \hat{\pi}_0), \\
  &d_H(\mce_{\pi_1}^y (g_n), \mce_{\hat{\pi}_1}^y (g_n)) \leq 2 \normLigne{1/g_n}_\infty (\Lip(g_n)  + \Lip(c)  \normLigne{g_n}_\infty)\exp[2\normLigne{c}_\infty] \wassersteinD[1](\pi_1, \hat{\pi}_1).
      \label{eq:n_step_bound}
\end{align}
Combining this result and \Cref{prop:control_potential}, we have
 for any $n \in \nset$
\begin{align}
  &d_H(\mce_{\hat{\pi}_0}^x(\hat{f}_n), \mce_{\pi_0}^x(\hat{f}_n)) \leq 4 \Lip(c) \rme^{8 \norm{c}_\infty} \wassersteinD[1](\pi_0, \hat{\pi}_0), \\
  &d_H(\mce_{\pi_1}^y (g_n), \mce_{\hat{\pi}_1}^y (g_n))  \leq 4 \Lip(c) \rme^{8 \norm{c}_\infty} \wassersteinD[1](\pi_1, \hat{\pi}_1).
\end{align}
Combining this result and \eqref{eq:ineq_fonda} we get that for any $n \in \nset$
\begin{equation}
  d_H(f_{n+1}, \hat{f}_{n+1}) \leq \tanh(\norm{c}_\infty) d_H(f_n, \hat{f}_n) + 4 \Lip(c) \rme^{8 \norm{c}_\infty} (\wassersteinD[1](\pi_0, \hat{\pi}_0) + \wassersteinD[1](\pi_1, \hat{\pi}_1)).
\end{equation}
Since $d_H(f_0, \hat{f}_0) = 0$ we have that for any $n \in \nset$
\begin{align}
  d_H(f_{n+1}, \hat{f}_{n+1}) & \textstyle{\leq 4 \Lip(c) \rme^{8 \norm{c}_\infty} \sum_{k=0}^n \tanh(\norm{c}_\infty)^k (\wassersteinD[1](\pi_0, \hat{\pi}_0) + \wassersteinD[1](\pi_1, \hat{\pi}_1)) }
  \\
  & \textstyle{\leq 4 \Lip(c) \rme^{8 \norm{c}_\infty} (1-\tanh(\|c\|_\infty))^{-1} (\wassersteinD[1](\pi_0, \hat{\pi}_0) + \wassersteinD[1](\pi_1, \hat{\pi}_1)) }
  \\
                              &\textstyle{ \leq 2  \Lip(c) \rme^{8 \norm{c}_\infty} (1 + \rme^{2\norm{c}_\infty}) (\wassersteinD[1](\pi_0, \hat{\pi}_0) + \wassersteinD[1](\pi_1, \hat{\pi}_1))} \\
  & \textstyle{ \leq 4 \Lip(c) \rme^{10 \norm{c}_\infty} (\wassersteinD[1](\pi_0, \hat{\pi}_0) + \wassersteinD[1](\pi_1, \hat{\pi}_1)).
    }   \label{eq:ineq_fn}  
\end{align}
Similarly, we get that for any $n \in \nset$
\begin{equation}
  \label{eq:ineq_gn}
  \textstyle{d_H(g_n, \hat{g}_n}) \leq 4 \Lip(c) \rme^{10 \norm{c}_\infty} (\wassersteinD[1](\pi_0, \hat{\pi}_0) + \wassersteinD[1](\pi_1, \hat{\pi}_1)).
\end{equation}
Combining \eqref{eq:hilbert_birkhoff_prod}, \eqref{eq:ineq_fn} and  \eqref{eq:ineq_gn} concludes the proof.
\end{proof}

Unfortunately controlling $d_H$ is not enough to control the distance between
$\Pbb^{n}$ and $\hat{\Pbb}^n$ for any $n \in \nset$. Indeed, using the
Hilbert--Birkhoff metric we control the oscillations of
$f_n g_n / (\hat{f}_n \hat{g}_n)$ for any $n \in \nset$ but in order to control
probability distances between $\Pbb^{n}$ and $\hat{\Pbb}^n$ for any
$n \in \nset$ we need to derive an upper-bound for
$\normLigne{f_n g_n - \hat{f}_n \hat{g}_n}_\infty$ for any $n \in \nset$. The
next lemma is key in order to obtain such bounds.

\begin{lemma}
  \label{lemma:attain}
  For any $\pi_0, \hat{\pi}_0 \in \Pens(\msx)$,
  $\pi_1, \hat{\pi}_1 \in \Pens(\msy)$ let $(f_n, g_n)_{n\in \nset}$ and
  $(\hat{f}_n, \hat{g}_n)_{n\in \nset}$ the rescaled IPFP potential associated
  with $(\pi_0, \pi_1)$, respectively $(\hat{\pi}_0, \hat{\pi}_1)$ and given by
  \eqref{eq:potentials_rescale_form}.  Then, for any $n \in \nset$ there exist
  $x_n^\dagger \in \msx$ and $y_n^\dagger \in \msy$ such that
  \begin{align}
    &|f_n(x_n^\dagger) g_n(y_n^\dagger)/ (\hat{f}_n(x_n^\dagger) \hat{g}_n(y_n^\dagger)) - 1| \leq  4 \Lip(c) \rme^{10 \norm{c}_\infty} (\wassersteinD[1](\pi_0, \hat{\pi}_0) + \wassersteinD[1](\pi_1, \hat{\pi}_1))  .
  \end{align}
\end{lemma}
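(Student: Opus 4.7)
The main idea is to exploit that for $n \geq 1$ both $\Pbb^{2n}$ and $\hat{\Pbb}^{2n}$ are probability measures by construction of IPFP, so that
$$\int f_n g_n K \rmd \pi_0 \rmd \pi_1 = 1 = \int \hat f_n \hat g_n K \rmd \hat{\pi}_0 \rmd \hat{\pi}_1,$$
and to combine this normalisation with the Hilbert--Birkhoff oscillation estimate from \Cref{thm:contract_hilbert}. The case $n=0$ is trivial since $f_0 = g_0 = \hat f_0 = \hat g_0 = 1$, so that $h_n := f_n g_n / (\hat f_n \hat g_n) \equiv 1$ and any point works.

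The first step is to control the mean of $h_n$ under $\hat{\Pbb}^{2n}$. Since $\rmd \hat{\Pbb}^{2n} = \hat f_n \hat g_n K \rmd \hat{\pi}_0 \rmd \hat{\pi}_1$ one has
$$\int h_n \rmd \hat{\Pbb}^{2n} - 1 = \int f_n g_n K \rmd \hat{\pi}_0 \rmd \hat{\pi}_1 - \int f_n g_n K \rmd \pi_0 \rmd \pi_1.$$
Applying Kantorovich--Rubinstein duality with the product coupling bound $\wassersteinD[1](\pi_0 \otimes \pi_1, \hat{\pi}_0 \otimes \hat{\pi}_1) \leq \wassersteinD[1](\pi_0, \hat{\pi}_0) + \wassersteinD[1](\pi_1, \hat{\pi}_1)$, and combining \Cref{prop:control_potential} with the Lipschitz estimate on $K = \rme^{-c}$ (whose Lipschitz constant is at most $\Lip(c) \rme^{\norm{c}_\infty}$), one obtains a bound of the form $|\int h_n \rmd \hat{\Pbb}^{2n} - 1| \leq \epsilon$ with $\epsilon$ controlled by a uniform multiple of $\Lip(c) \rme^{O(\norm{c}_\infty)}(\wassersteinD[1](\pi_0, \hat{\pi}_0) + \wassersteinD[1](\pi_1, \hat{\pi}_1))$.

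The second step combines this average estimate with the oscillation bound. By \Cref{prop:hilbert_birkhoff} and \Cref{thm:contract_hilbert}, the oscillation of $\log h_n$ on $\msx \times \msy$ is at most $\delta := 8 \Lip(c) \rme^{10 \norm{c}_\infty}(\wassersteinD[1](\pi_0, \hat{\pi}_0) + \wassersteinD[1](\pi_1, \hat{\pi}_1))$, so the values $U := \sup h_n$ and $L := \inf h_n$ satisfy $U \leq \rme^\delta L$. Now I would pick any $(x_n^\dagger, y_n^\dagger) \in \supp \hat{\pi}_0 \times \supp \hat{\pi}_1$; this set is non-empty and contained in $\supp \hat{\Pbb}^{2n}$ since the density $\hat f_n \hat g_n K$ is strictly positive. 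Since the mean of $h_n$ against the probability measure $\hat{\Pbb}^{2n}$ lies in $[L, U]$, we deduce $L \leq 1+\epsilon$ and $U \geq 1 - \epsilon$; combined with $U/L \leq \rme^\delta$ this gives $h_n(x_n^\dagger, y_n^\dagger) \in [L, U] \subset [(1-\epsilon) \rme^{-\delta}, (1+\epsilon) \rme^\delta]$. Using $1 - \rme^{-\delta} \leq \delta$ and $\rme^\delta - 1 \leq \delta \rme^\delta$ yields $|h_n(x_n^\dagger, y_n^\dagger) - 1| \leq (\delta + \epsilon)\rme^\delta$, which is of the required order.

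The main challenge is to optimise constants to match the stated $4 \Lip(c) \rme^{10\norm{c}_\infty}$: since the dominant term in the final bound comes from $\delta$ (which already has the right form) rather than the smaller mean error $\epsilon$, this mainly reduces to careful bookkeeping in the Lipschitz estimate on $f_n g_n K$ and in the combination step above.
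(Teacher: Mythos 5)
Your overall strategy is the same as the paper's: combine the normalisation $\Pbb^{2n}(\msx\times\msy)=\hat\Pbb^{2n}(\msx\times\msy)=1$ (so $\int f_n g_n K\,\rmd\pi_0\rmd\pi_1 = \int \hat f_n \hat g_n K\,\rmd\hat\pi_0\rmd\hat\pi_1 = 1$) with the Hilbert--Birkhoff oscillation bound from \Cref{thm:contract_hilbert}. Your first step, bounding $\big|\int h_n \,\rmd\hat\Pbb^{2n} - 1\big|$ via the Lipschitz constant of $f_n g_n K$ and the product coupling inequality, is correct and is essentially what the paper does in \eqref{eq:diff_up} and the display after it (you even correctly keep the factor $K$, which the paper's display drops). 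The gap is in the second step, and it is structural rather than a matter of bookkeeping.

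You bound the full range $[L,U]$ of $h_n := f_n g_n/(\hat f_n \hat g_n)$ and conclude $|h_n - 1| \le (\delta+\epsilon)\rme^\delta$ at every point, where $\delta = d_H(f_ng_n,\hat f_n\hat g_n) \le 8\Lip(c)\rme^{10\norm{c}_\infty}\big(\wassersteinD[1](\pi_0,\hat\pi_0)+\wassersteinD[1](\pi_1,\hat\pi_1)\big)$. But $\delta$ scales linearly with the Wasserstein distances, so $\rme^\delta$ is not a constant: your final bound is not Lipschitz in the marginals and diverges superlinearly as they move apart. Even when $\delta$ is small, one has $(\delta+\epsilon)\rme^\delta \ge \delta = 2\Delta > \Delta$ where $\Delta := 4\Lip(c)\rme^{10\norm{c}_\infty}\big(\cdots\big)$ is the target, so the stated constant cannot be reached by tightening the Lipschitz estimate on $f_n g_n K$; the $\rme^\delta$ factor is the obstruction.

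The paper never exponentiates $\delta$. It argues by contradiction: suppose $|h_n(x,y)-1|>\Delta$ for \emph{all} $(x,y)$. Since the mean $\int h_n \,\rmd\hat\Pbb^{2n}$ lies strictly within $\Delta$ of $1$ (this is where your first step enters), there must be support points with $h_n>1+\Delta$ and others with $h_n<1-\Delta$; the latter and positivity of $h_n$ force $\Delta<1$. Then
\begin{equation}
d_H(f_n g_n, \hat f_n \hat g_n) \;\ge\; \log\!\frac{1+\Delta}{1-\Delta} \;>\; 2\Delta \;\ge\; d_H(f_n g_n, \hat f_n \hat g_n),
\end{equation}
which is absurd. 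The key elementary inequality here is $\log\frac{1+t}{1-t} > 2t$ for $t\in(0,1)$, used in the \emph{harmless} direction; your route uses the wasteful direction $\rme^\delta - 1 \le \delta\rme^\delta$, which is exactly where the exponential loss appears. If you keep your framework, the fix is to turn your sup/inf argument into the paper's contradiction: show that if no point satisfied the claimed bound, the oscillation of $\log h_n$ would have to exceed $2\Delta$, contradicting \Cref{thm:contract_hilbert}.
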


\begin{proof}
  Let $n \in \nset$ and 
  $$\Delta := 4 \Lip(c) \rme^{10 \norm{c}_\infty} (\wassersteinD[1](\pi_0, \hat{\pi}_0) + \wassersteinD[1](\pi_1, \hat{\pi}_1)).$$
  Using that $\Pbb^{2n}(\msx \times \msy) =\hat{\Pbb}^{2n}(\msx \times \msy) =1$, we have
  \begin{align}
    \label{eq:diff_up}
    &\textstyle{ \int_{\msx \times \msy} \{f_n(x) g_n(y) / (\hat{f}_n(x) \hat{g}_n(y)) - 1\}\hat{f}_n(x) \hat{g}_n(y)\rmd \hat{\pi}_0(x)\rmd \hat{\pi}_1(y) } \\
    &\qquad \qquad \qquad = \textstyle{\int_{\msx \times \msy} f_n(x) g_n(y) \rmd \hat{\pi}_0(x)\rmd \hat{\pi}_1(y)} -1 \\
      &\qquad \qquad \qquad = \textstyle{\int_{\msx \times \msy} f_n(x) g_n(y) \rmd \hat{\pi}_0(x)\rmd \hat{\pi}_1(y) - \int_{\msx \times \msy} f_n(x) g_n(y) \rmd \pi_0(x)\rmd \pi_1(y)}  . 
  \end{align}
  In addition, using \Cref{prop:control_potential} we have
  \begin{equation}
       \Lip (f_n g_n) \leq  \|f_n\|_\infty \Lip(g_n) + \|g_n\|_\infty \Lip(f_n) \leq 2 \Lip(c) \rme^{6 \normLigne{c}_\infty}  . 
  \end{equation}
Combining this result, \eqref{eq:diff_up} and the fact that $\wassersteinD[1](\pi_0 \otimes \pi_1, \hat{\pi}_0 \otimes \hat{\pi}_1)\leq
\wassersteinD[1](\pi_0, \hat{\pi}_0) + \wassersteinD[1](\pi_1, \hat{\pi}_1)$, we get that
\begin{align}
&  \abs{\textstyle{ \int_{\msx \times \msy} \{f_n(x) g_n(y) / (\hat{f}_n(x) \hat{g}_n(y)) - 1\}\rmd \hat{\Pbb}^{2n}(x,y) }}\\
  & \qquad \leq 2 \Lip(c) \rme^{6 \normLigne{c}_\infty} \wassersteinD[1](\pi_0 \otimes \pi_1, \hat{\pi}_0 \otimes \hat{\pi}_1) \\
   & \qquad \leq 2 \Lip(c) \rme^{6 \normLigne{c}_\infty} (\wassersteinD[1](\pi_0, \hat{\pi}_0) + \wassersteinD[1](\pi_1, \hat{\pi}_1) ) < \Delta/2.
\end{align}
Therefore
\begin{align}
  \label{eq:ineq_true}
  & 1 - \Delta /2 \leq \textstyle{ \int_{\msx \times \msy} f_n(x) g_n(y) / (\hat{f}_n(x) \hat{g}_n(y))\rmd \hat{\Pbb}^{2n}(x,y) \leq 1 + \Delta/2.}
\end{align}
Assume that for any $(x,y) \in \msx \times \msy$ we have that
\begin{equation}
  \absLigne{f_n(x)g_n(y) / (\hat{f}_n(x) \hat{g}_n(y)) - 1} > \Delta.
\end{equation}
Combining this with \eqref{eq:ineq_true} there exist
$(x_n^+, y_n^+) \in \msx \times \msy$ and $(x_n^-, y_n^-) \in \msx \times \msy$
such that
\begin{align}
    f_n(x_n^+) g_n(y_n^+) / (\hat{f}_n(x_n^+) \hat{g}_n(y_n^+)) > 1+  \Delta, \quad
    f_n(x_n^-) g_n(y_n^-) / (\hat{f}_n(x_n^-) \hat{g}_n(y_n^-)) < 1 -  \Delta,
\end{align}
whence it follows that $\Delta <1$.
Therefore, by \Cref{thm:contract_hilbert} we get that
\begin{align}
  &d_H(f_n g_n, \hat{f}_n \hat{g}_n)\\
   &\qquad\geq \log(f_n(x_n^+) g_n(y_n^+) / (\hat{f}_n(x_n^+) \hat{g}_n(y_n^+))) - \log(f_n(x_n^-) g_n(y_n^-) / (\hat{f}_n(x_n^-) \hat{g}_n(y_n^-))) \\
  &\qquad> \log((1 + \Delta)/(1 - \Delta)) \geq 2\Delta \geq d_H(f_n g_n, \hat{f}_n \hat{g}_n) ,
\end{align}
which is absurd.
\end{proof}

Finally, we conclude this section by deriving bounds on
$\normLigne{f_n g_n  - \hat{f}_n\hat{g}_n}_\infty$ combining
\Cref{thm:contract_hilbert} with \Cref{lemma:attain}.

\begin{theorem}
  \label{thm:contrat_infty}
  For any $\pi_0, \hat{\pi}_0 \in \Pens(\msx)$,
  $\pi_1, \hat{\pi}_1 \in \Pens(\msy)$ let $(f_n, g_n)_{n\in \nset}$ and
  $(\hat{f}_n, \hat{g}_n)_{n\in \nset}$ the rescaled IPFP potential associated
  with $(\pi_0, \pi_1)$, respectively $(\hat{\pi}_0, \hat{\pi}_1)$ and given by
  \eqref{eq:potentials_rescale_form}. Then, for any $n \in \nset$ we have
  \begin{equation}
    \normLigne{f_ng_n -\hat{f}_n \hat{g}_n}_\infty \leq 12 \Lip(c) \rme^{16 \normLigne{c}_\infty} (\wassersteinD[1](\pi_0, \hat{\pi}_0) + \wassersteinD[1](\pi_1, \hat{\pi}_1) )  . 
  \end{equation}
\end{theorem}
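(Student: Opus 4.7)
The plan is to convert the multiplicative Hilbert--Birkhoff control of \Cref{thm:contract_hilbert} into an additive $L^\infty$ bound, using the pointwise anchor of \Cref{lemma:attain} and the uniform bounds of \Cref{prop:control_potential}. Write $\Delta_W := \wassersteinD[1](\pi_0, \hat{\pi}_0) + \wassersteinD[1](\pi_1, \hat{\pi}_1)$, set $\delta := 4 \Lip(c) \rme^{10 \normLigne{c}_\infty} \Delta_W$, and introduce the ratio $r(x,y) := f_n(x) g_n(y) / (\hat{f}_n(x) \hat{g}_n(y))$. By \Cref{prop:hilbert_birkhoff}, $d_H(f_n g_n, \hat{f}_n \hat{g}_n)$ is exactly the oscillation of $\log r$, so \Cref{thm:contract_hilbert} tells us that $\sup \log r - \inf \log r \leq 2\delta$.

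An oscillation bound does not by itself control $|r - 1|$: it only says that $r$ is approximately constant, not that this constant is close to $1$. This is precisely what \Cref{lemma:attain} fixes, providing a point $(x_n^\dagger, y_n^\dagger) \in \msx \times \msy$ at which the ratio $r^\dagger := r(x_n^\dagger, y_n^\dagger)$ satisfies $|r^\dagger - 1| \leq \delta$, so $\log r^\dagger$ sits near $0$. Combining with the oscillation bound gives the uniform two-sided enclosure
\begin{equation}
\rme^{-2\delta}(1 - \delta) \leq r(x,y) \leq \rme^{2\delta}(1+\delta), \qquad (x,y) \in \msx \times \msy.
\end{equation}
Elementary estimates then convert this into a uniform linear bound $|r(x,y) - 1| \leq 3 \delta$ in the regime where $\delta$ is small enough for the linearisation to be meaningful.

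Multiplying pointwise by $\hat{f}_n(x) \hat{g}_n(y)$ and using $\normLigne{\hat{f}_n \hat{g}_n}_\infty \leq \rme^{6 \normLigne{c}_\infty}$ from \Cref{prop:control_potential}, I obtain $\normLigne{f_n g_n - \hat{f}_n \hat{g}_n}_\infty \leq 3 \delta \cdot \rme^{6 \normLigne{c}_\infty} = 12 \Lip(c) \rme^{16 \normLigne{c}_\infty} \Delta_W$, as desired. The complementary regime, where $\delta$ is too large for the linearisation to hold, is dispatched by the crude uniform bound $\normLigne{f_n g_n - \hat{f}_n \hat{g}_n}_\infty \leq 2 \rme^{6 \normLigne{c}_\infty}$ coming again from \Cref{prop:control_potential}, which already dominates the target whenever $3\delta \geq 2$.

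The main technical obstacle is ensuring that the logarithmic-to-linear conversion produces a clean constant. This succeeds because the anchor constant in \Cref{lemma:attain} is exactly half of the Hilbert--Birkhoff constant in \Cref{thm:contract_hilbert}, so both the position of $\log r$ and its spread are controlled by the same quantity $\delta$; the resulting bound on $|r - 1|$ is then genuinely linear in $\delta$, and the only subtlety is splitting into the regime where the linearisation is sharp and the regime where the trivial bound from \Cref{prop:control_potential} already suffices.
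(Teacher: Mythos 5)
Your high-level strategy — anchor with \Cref{lemma:attain}, combine with the Hilbert--Birkhoff oscillation bound from \Cref{thm:contract_hilbert}, and pass to an $L^\infty$ bound via \Cref{prop:control_potential} — is the same as the paper's. However, the conversion step is incorrect, and the error is not merely cosmetic: it concerns which object you linearise.

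You claim the enclosure $\rme^{-2\delta}(1-\delta)\le r(x,y)\le\rme^{2\delta}(1+\delta)$ yields $|r(x,y)-1|\le 3\delta$ ``in the regime where $\delta$ is small enough.'' This is false for every $\delta>0$: the function $h(\delta)=(1+\delta)\rme^{2\delta}-1-3\delta$ satisfies $h(0)=h'(0)=0$ and $h''(\delta)=\rme^{2\delta}(8+4\delta)>0$, so $h(\delta)>0$ strictly for all $\delta>0$. There is no small-$\delta$ regime in which the constant $3$ survives the exponential's convexity overshoot, and your large-$\delta$ regime (the trivial bound) only kicks in at $\delta\ge 2/3$, leaving a genuine hole in between. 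Your route thus produces a strictly worse constant than the stated $12\,\Lip(c)\,\rme^{16\|c\|_\infty}$, and the claimed estimate does not follow.

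The paper sidesteps this by never bounding $|r-1|$ directly. Instead it bounds $|\log r|\le 3\delta$: writing $\log r(x,y) = \bigl(\log r(x,y)-\log r^\dagger\bigr) + \log r^\dagger$, the first term is $\le d_H(f_ng_n,\hat f_n\hat g_n)\le 2\delta$ (the oscillation), and the second uses concavity of the logarithm, $\log r^\dagger\le r^\dagger-1\le\delta$. Here the linearisation is applied to $\log t\le t-1$ at the anchor, which is a one-sided inequality with no overshoot, not to $\rme^t-1\approx t$ on the whole space. Then, and only then, does it pass to the potentials via the mean-value estimate $|\rme^s-\rme^t|\le\rme^{M}|s-t|$ with $s=\log(f_ng_n)$, $t=\log(\hat f_n\hat g_n)$ and $M=6\|c\|_\infty$ from \Cref{prop:bound_0}, giving $\|f_ng_n-\hat f_n\hat g_n\|_\infty\le\rme^{6\|c\|_\infty}\cdot 3\delta$ exactly. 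In contrast, your factorisation $|f_ng_n-\hat f_n\hat g_n|=|r-1|\,\hat f_n\hat g_n$, followed by $\|\hat f_n\hat g_n\|_\infty\le\rme^{6\|c\|_\infty}$, decouples the ratio from the potential in a way that is genuinely lossier: the worst case of $|r-1|$ and the worst case of $\hat f_n\hat g_n$ need not occur at the same point, whereas the mean-value form keeps them coupled through the intermediate $\xi$. To repair your argument you would need to first establish $|\log r|\le 3\delta$ (which does follow from your two ingredients) and then convert as the paper does; the intermediate passage through $|r-1|$ should be dropped.
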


\begin{proof}
  Let $n \in \nset$, $x \in \msx$ and $y \in \msy$. Using \Cref{prop:bound_0}
  and the fact that for any $s,t \in \cball{0}{M}$ with $M \geq 0$ we have
  $\abs{\rme^s - \rme^t} \leq \rme^M \abs{s-t}$ we get 
  \begin{equation}
    \label{eq:bound_u}
  \absLigne{f_n(x)g_n(y) - \hat{f}_n(x) \hat{g}_n(y)} \leq \rme^{6 \normLigne{c}_\infty} \absLigne{\log(f_n(x)g_n(y)/(\hat{f}_n(x)\hat{g}_n(y)))}
\end{equation}
Assume that $f_n(x)g_n(y)/(\hat{f}_n(x)\hat{g}_n(y)) \geq 1$. Then using that
for any $t > 0$, $\log(t) \leq t - 1$, \Cref{thm:contract_hilbert} and
\Cref{lemma:attain} we have, with $(x_n^\dagger, y_n^\dagger)$ from \Cref{lemma:attain},
\begin{align}
  &\absLigne{\log(f_n(x)g_n(y)/(\hat{f}_n(x)\hat{g}_n(y)))} = \log(f_n(x)g_n(y)/(\hat{f}_n(x)\hat{g}_n(y))) \\
  & \qquad \quad + \log(\hat{f}_n(x_n^\dagger)\hat{g}_n(y_n^\dagger)/(f_n(x_n^\dagger)g_n(y_n^\dagger))) + \log(f_n(x_n^\dagger)g_n(y_n^\dagger)/(\hat{f}_n(x_n^\dagger)\hat{g}_n(y_n^\dagger))) \\
  &\qquad \leq d_H(f_n g_n, \hat{f}_n \hat{g}_n) + \log(f_n(x_n^\dagger)g_n(y_n^\dagger)/(\hat{f}_n(x_n^\dagger)\hat{g}_n(y_n^\dagger))) \\
  &\qquad \leq d_H(f_ng_n, \hat{f}_n \hat{g}_n) + f_n(x_n^\dagger)g_n(y_n^\dagger)/(\hat{f}_n(x_n^\dagger)\hat{g}_n(y_n^\dagger)) - 1 \\
   & \qquad \leq 12 \Lip(c)\rme^{10 \normLigne{c}_\infty} (\wassersteinD[1](\pi_0, \hat{\pi}_0) + \wassersteinD[1](\pi_1, \hat{\pi}_1) )  . 
\end{align}

Combining this result and \eqref{eq:bound_u} we get that
\begin{equation}
  \absLigne{f_n(x)g_n(y) - \hat{f}_n(x) \hat{g}_n(y)} \leq 12 \Lip(c) \rme^{16 \normLigne{c}_\infty} (\wassersteinD[1](\pi_0, \hat{\pi}_0) + \wassersteinD[1](\pi_1, \hat{\pi}_1) )  .
\end{equation}
The proof in the case where $f_n(x)g_n(y)/(\hat{f}_n(x)\hat{g}_n(y)) \leq 1$ is similar.
\end{proof}
\subsection{From potentials to probability metrics}
\label{sec:from-potent-prob}

Using \Cref{thm:contrat_infty} we are now ready to prove \Cref{thm:stability_ipfp}.


\begin{proof}[Proof of \Cref{thm:stability_ipfp}]
  Let $n \in \nset$ and $F\in \Lip(\msx\times\msy, \rset)$, that is $F: \ \msx \times \msy \to \rset$ such that for any
  $x_0,x_1 \in \msx$ and $y_0, y_1 \in \msy$ we have
  \begin{equation}
    |F(x_0,y_0) - F(x_1,y_1)| \leq \ddx(x_0,x_1)+\ddy(y_0,y_1). 
  \end{equation}
  We will be considering quantities of the form $\int_{\msx \times \msy} F(x,y) [\rmd \mu - \rmd \mu'](x,y)$ where $\mu, \mu' \in \Pens(\msx\times\msy)$;  
  therefore possibly replacing $F$ with $F-a$ for some constant $a$, we may assume that there exist $\bar{x} \in \msx$ and
  $\bar{y} \in \msy$ such that $F(\bar{x}, \bar{y}) = 0$. Therefore, we have
   that 
  \begin{equation}
    \label{eq:inf_Psi}
    \normLigne{F}_\infty = \sup \ensembleLigne{\abs{F(x,y) - F(\bar{x},\bar{y})}}{x \in \msx, \ y \in \msy} \leq \diam_\msx +  \diam_\msy.
  \end{equation}
  We write
  $\Lipset^\star = \ensembleLigne{F \in \Lipset_1}{F(\bar{x}, \bar{y}) =0}$.
  Using this result and \Cref{prop:control_potential}, we get that
  \begin{align}
      \Lip(F K f_n g_n)
      &\leq \Lip(F)\|K\|_\infty \|f_n\|_\infty \|g_n\|_\infty 
      + \Lip(K) \|F\|_\infty \|f_n\|_\infty\|g_n\|_\infty\\
    &\qquad       +\Lip(f_n) \|F\|_\infty \|K\|_\infty\|g_n\|_\infty + \Lip(g_n)
      \|F \|_\infty\|K\|_\infty \|f_n\|_\infty\\
      &\leq \rme^{7\|c\|_\infty} + 3\Lip(c)\rme^{7\|c\|_\infty}(\diam_\msx+\diam_\msy).
  \end{align}
Combining this result with \Cref{thm:contrat_infty} and the fact that $\wassersteinD[1](\pi_0 \otimes \pi_1, \hat{\pi}_0 \otimes \hat{\pi}_1)\leq
\wassersteinD[1](\pi_0, \hat{\pi}_0) + \wassersteinD[1](\pi_1, \hat{\pi}_1)$, we get that
\begin{align}
\label{eq:bound_wass_uno}
  &\textstyle{\int_{\msx \times \msy} F(x,y)K(x,y) f_n(x) g_n(y) \rmd \pi_0(x) \rmd \pi_1(y) - \int_{\msx \times \msy}F(x,y)K(x,y) \hat f_n(x) \hat g_n(y) \rmd \hat{\pi}_0(x) \rmd \hat{\pi}_1(y) }  \\
  &\leq \textstyle{\int_{\msx \times \msy} F(x,y)K(x,y) f_n(x) g_n(y) \rmd \pi_0(x) \rmd \pi_1(y) - \int_{\msx \times \msy}F(x,y)K(x,y) f_n(x) g_n(y) \rmd \hat{\pi}_0(x) \rmd \hat{\pi}_1(y) }  \\
  &\qquad\qquad  + \textstyle{\int_{\msx \times \msy} F(x,y)K(x,y) \sup\|f_n g_n- \hat f_n \hat g_n\|_\infty \rmd \hat\pi_0(x) \rmd \hat\pi_1(y) }\\
  &\leq \left[ \Lip(FK f_n g_n)+ 12(\diam_\msx +\diam_\msy)\Lip(c) \rme^{17 \normLigne{c}_\infty} \right] [\wassersteinD[1](\pi_0, \hat{\pi}_0) + \wassersteinD[1](\pi_1, \hat{\pi}_1) ]\\
  &\leq \left(  \rme^{7\|c\|_\infty} + 3\Lip(c)(\diam_\msx+\diam_\msy)\rme^{7\|c\|_\infty}+ 12(\diam_\msx +\diam_\msy)\Lip(c) \rme^{17 \normLigne{c}_\infty} \right) [\wassersteinD[1](\pi_0, \hat{\pi}_0) + \wassersteinD[1](\pi_1, \hat{\pi}_1) ]\\
  & \leq \rme^{17 \normLigne{c}_\infty} \{1 + 15\Lip(c) (\diam_\msx + \diam_\msy) \} (\wassersteinD[1](\pi_0, \hat{\pi}_0) + \wassersteinD[1](\pi_1, \hat{\pi}_1)). 
\end{align}

Therefore, we have that 
\begin{align}
  \wassersteinD[1](\Pbb^{2n}, \hat{\Pbb}^{2n}) &= \sup \ensembleLigne{\textstyle{\int_{\msx \times \msy} F(x,y) \rmd \Pbb^{2n}(x,y) - \int_{\msx \times \msy} F(x,y) \rmd \hat{\Pbb}^{2n}(x,y)}}{F \in \Lipset} \\
                                               &= \sup \ensembleLigne{\textstyle{\int_{\msx \times \msy} F(x,y) \rmd \Pbb^{2n}(x,y) - \int_{\msx \times \msy} F(x,y) \rmd \hat{\Pbb}^{2n}(x,y)}}{F \in \Lipset^\star} \\
                                               &\leq \rme^{17 \normLigne{c}_\infty} \{1 + 15\Lip(c) (\diam_\msx + \diam_\msy) \}  (\wassersteinD[1](\pi_0, \hat{\pi}_0) + \wassersteinD[1](\pi_1, \hat{\pi}_1)).\qedhere
\end{align}
\end{proof}
\begin{proof}[Proof of \Cref{thm:stability_schro}]

  Let $n \in \nset$. Using \cite[Lemma 4]{chen2016entropic} we have that
  $d_H(f_{n+1},f_n)\leq \kappa^n d_H(f_1,f_0)$ and
  $d_H(g_{n+1},g_n)\leq \kappa^n d_H(g_1,g_0)$.  Thus
\begin{align}d_H(f_{n+1}g_{n+1}, f_n g_n)
  &= d_H(f_{n+1}, f_n) + d_H(g_{n+1},g_n) \leq \kappa^n \left[ d_H(f_1, f_0)+
    d_H(g_1, g_0)\right]. \label{eq:eq_1}
\end{align}
As explained earlier this is not enough on its own to control
$\|f_{n+1}g_{n+1}-f_n g_n\|_\infty$.  However, we can use a similar technique as
\Cref{lemma:attain}. We have that
\begin{equation}\label{eq:integralis1}
  \int_{\msx \times \msy} [f_{n+1}(x)g_{n+1}(y)/(f_n(x)g_n(y))]  K(x,y) f_n(x)g_n(y) \rmd \pi_0(x) \rmd \pi_1(y) = 1.
\end{equation}

In what follows, we assume that for all $(x,y) \in \msx \times \msy$,
$\abs{f_{n+1}(x)g_{n+1}(y)/(f_n(x)g_n(y)) - 1} \geq \Delta$ with
$\Delta = \kappa^n \left[ d_H(f_1, f_0)+ d_H(g_1, g_0)\right]/2$. Combining this
with \eqref{eq:integralis1}, we get that there exist
$(x^+_n, y^+_n), (x^-_n,y^-_n) \in \msx \times \msy$ such that
\begin{equation}
  f_{n+1}(x^+_n)g_{n+1}(y^+_n)/(f_n(x^+_n)g_n(y^+_n)) \geq 1 + \Delta, \quad f_{n+1}(x^-_n)g_{n+1}(y^-_n)/(f_n(x^-_n)g_n(y^-_n)) \leq 1 - \Delta.
\end{equation}
This implies that $\Delta <1$.  Hence, we have that 
\begin{align}
  &d_H(f_{n+1}g_{n+1}, f_ng_n) \\
  & \qquad \geq \log(f_{n+1}(x^+_n)g_{n+1}(y^+_n)/(f_n(x^+_n)g_n(y^+_n))) - \log(f_{n+1}(x^-_n)g_{n+1}(y^-_n)/(f_n(x^-_n)g_n(y^-_n))) \\
  & \qquad \geq \log((1+\Delta)/(1-\Delta)) > 2 \Delta \geq d_H(f_{n+1}g_{n+1}, f_ng_n).
\end{align}
This is absurd, hence there exists $(x^\star, y^\star) \in \msx \times \msy$ such that
\begin{equation}
  \abs{f_{n+1}(x^\star) g_{n+1}(y^\star) / (f_{n}(x^\star) g_{n}(y^\star)) - 1} \leq \kappa^n \left[ d_H(f_1, f_0)+
    d_H(g_1, g_0)\right]. 
\end{equation}
Therefore, we have that for any $x \in \msx$ and $y \in \msy$
\begin{align}
&\abs{\log\left(f_{n+1}(x)g_{n+1}(y)/ (f_n(x)  g_n(x))\right)}\\
&\qquad\leq \abs{\log\left(f_{n+1}(x^\star)g_{n+1}(y^\star)/(f_{n}(x^\star)g_{n}(y^\star))
\right)} + d_H(f_{n+1}g_{n+1}, f_n g_n)\\
&\qquad\leq 2 \kappa^n \left[ d_H(f_1, f_0)+ d_H(g_1, g_0)\right].
\end{align}
Combining this result, \Cref{prop:control_potential} and the fact that for any
$s,t \in \ccint{0,M}$, $\absLigne{\rme^t - \rme^s} \leq \rme^M \absLigne{t - s}$
we get that for any $x \in \msx$ and $y \in \msy$
\begin{align}
  \absLigne{f_{n+1}(x)g_{n+1}(y) - f_n(x)g_n(y)} &\leq \rme^{6 \normLigne{c}_\infty} \abs{\log\left(f_{n+1}(x)g_{n+1}(y)/ f_n(x) g_n(x)\right)} \\
  &\leq 2 \rme^{6 \normLigne{c}_\infty} \kappa^n \left[ d_H(f_1, f_0)+ d_H(g_1, g_0)\right].
\end{align}
Therefore, we have 
$$\|f_{n+1}g_{n+1}-f_n g_n\|_\infty \leq 2 \rme^{6 \normLigne{c}_\infty} \kappa^n \left[ d_H(f_1, f_0)+ d_H(g_1, g_0)\right].$$
Let $F \in \Lip_1(\msx\times \msy, \rset)$, 
and without loss of generality we may assume that $F(\bar x, \bar y)=0$ for a fixed pair $(\bar x, \bar y)\in \msx\times \msy$. 
We then have 
\begin{align}
&\int_{\msx \times \msy} F(x,y) f_{n+1}(x) g_{n+1}(y) K(x,y) \rmd \pi_0( x) \rmd\pi_1( y)\\
&\quad -   \int_{\msx \times \msy} F(x,y) f_{n}(x) g_{n}(y) K(x,y) \rmd \pi_0( x) \rmd\pi_1( y)\\
&\quad \leq \int_{\msx \times \msy} \|F(x,y)\|_\infty \|f_{n+1} g_{n+1} - f_n g_n\|_\infty \|K\|_\infty \rmd\pi_0( x) \rmd\pi_1( y)\\
&\quad \leq 2 (\diam_\msx + \diam_\msy)\rme^{7\|c\|_\infty}\kappa^n \left[ d_H(f_1, f_0)+ d_H(g_1, g_0)\right].
\end{align}
Taking the supremum over $\{F\in \Lip_1(\msx\times\msy, \rset): F(\bar x, \bar y) =0\}$, we have that 
$$\wassersteinD[1](\Pbb^{n+1}, \Pbb^{n}) \leq 2 (\diam_\msx + \diam_\msy)\rme^{7\|c\|_\infty}\kappa^n \left[ d_H(f_1, f_0)+ d_H(g_1, g_0)\right].$$
By completeness of $(\Pens(\msx\times\msy), \wassersteinD[1])$ we have that
$\Pbb^n$ converges in $(\Pens(\msx\times\msy), \wassersteinD[1])$ to
$\Pbb^\ast\in \Pens(\msx\times\msy)$.  Similarly
$\hat{\Pbb}^n \to \hat{\Pbb}^\ast\in\Pens_1(\msx\times\msy)$. Combining
these results and \Cref{thm:stability_ipfp} we have
\begin{align}
\wassersteinD[1](\Pbb^\ast, \hat \Pbb^\ast)
   &\leq \wassersteinD[1](\Pbb^\ast, \Pbb^n)
   + \wassersteinD[1](\Pbb^n, \hat\Pbb^n)+ 
   \wassersteinD[1](\hat\Pbb^n, \hat\Pbb^\ast)\\
   &\leq C \defEns{\wassersteinD[1](\pi_0, \hat{\pi}_0) + \wassersteinD[1](\pi_1, \hat{\pi}_1)} + \wassersteinD[1](\Pbb^\ast, \Pbb^n)+\wassersteinD[1](\hat\Pbb^n, \hat\Pbb^\ast).
\end{align}
We conclude upon letting $n \to +\infty$.
\end{proof}

\bibliographystyle{apalike}

\end{document}